\documentclass{article} 
\usepackage{iclr2026_conference,times}

\usepackage{amsmath,amsfonts,bm}

\def\eqref#1{equation~\ref{#1}}

\def\1{\bm{1}}

\DeclareMathAlphabet{\mathsfit}{\encodingdefault}{\sfdefault}{m}{sl}
\SetMathAlphabet{\mathsfit}{bold}{\encodingdefault}{\sfdefault}{bx}{n}

\usepackage{hyperref}
\usepackage{url}
\usepackage{graphicx}
\newtheorem{proposition}{Proposition}
\usepackage{algorithm}
\usepackage{algpseudocode}
\usepackage{wrapfig}
\usepackage{booktabs}       
\usepackage{caption}
\usepackage{subcaption}
\newtheorem{proof}{Proof}
\usepackage{multirow}
\usepackage{pifont}
\usepackage[table]{xcolor}
\definecolor{erkbg}{RGB}{235,245,255}

\title{Error as Signal: Stiffness-Aware Diffusion Sampling via Embedded Runge-Kutta Guidance}

\author{Inho Kong$^{1*}$, Sojin Lee$^{2*}$, Youngjoon Hong$^{3, 4\dagger}$, Hyunwoo J. Kim$^{2\dagger}$ \\
$^{1}$Korea University
$^{2}$KAIST
$^{3}$Seoul National University 
$^{4}$Korea Institute for Advanced Study
\\
\texttt{inh212@korea.ac.kr} \\
\texttt{\{sojin.lee, hyunwoojkim\}@kaist.ac.kr} \\
\texttt{hongyj@snu.ac.kr} \\
}

\newcommand{\cmark}{\ding{51}}
\newcommand{\xmark}{\ding{55}}

\newcommand\nnfootnote[1]{%
  \begin{NoHyper}
  \renewcommand\thefootnote{}\footnote{#1}%
  \addtocounter{footnote}{-1}%
  \end{NoHyper}
}

\iclrfinalcopy 

\begin{document}

\maketitle
\nnfootnote{$^{*}$ Equal contribution. $^{\dagger}$ Corresponding authors.}

\begin{abstract}
Classifier-Free Guidance (CFG) has established the foundation for guidance mechanisms in diffusion models, showing that well-designed guidance proxies significantly improve conditional generation and sample quality.
Autoguidance (AG) has extended this idea, but it relies on an auxiliary network and leaves solver-induced errors unaddressed.
In stiff regions, the ODE trajectory changes sharply, where local truncation error (LTE) becomes a critical factor that deteriorates sample quality.
Our key observation is that these errors align with the dominant eigenvector, motivating us to leverage the solver-induced error as a guidance signal.
We propose \textbf{E}mbedded \textbf{R}unge–\textbf{K}utta \textbf{Guid}ance (ERK-Guid), which exploits detected stiffness to reduce LTE and stabilize sampling.
We theoretically and empirically analyze stiffness and eigenvector estimators with solver errors to motivate the design of ERK-Guid. 
Our experiments on both synthetic datasets and the popular benchmark dataset, ImageNet, demonstrate that ERK-Guid consistently outperforms state-of-the-art methods.
Code is available at {\color{magenta}\url{https://github.com/mlvlab/ERK-Guid}}.
\end{abstract}
\section{Introduction}
Generative models~\cite{kingma2013auto,rezende2015variational,heusel2017gans,lipman2022flow,ho2020denoising,song2020score,goodfellow2020generative} aim to approximate complex data distributions and generate new samples, enabling a wide range of applications in image synthesis~\cite{brock2018large,rombach2022high,zhang2023adding,kawar2023imagic, park2024constant, park2025blockwise}, editing~\cite{brooks2023instructpix2pix}, and video generation~\cite{gupta2024photorealistic, park2024ddmi}.
Among them, diffusion models~\cite{ho2020denoising,song2020score,karras2022elucidating} have emerged as a dominant paradigm, achieving strong performance across diverse generation tasks~\cite{gupta2024photorealistic,Karras2024autoguidance, lee2024diffusion, ko2024stochastic}.
They define a forward process that gradually perturbs data into Gaussian noise, while a neural network is trained to predict the score function of each noisy distribution.
This score estimate parameterizes the reverse-time dynamics, allowing data to be reconstructed through iterative denoising.
Sampling is commonly formulated as solving an ordinary differential equation (ODE) or stochastic differential equation (SDE), where the drift is defined by the learned network~\cite{karras2022elucidating,song2020denoising,lu2022dpm}.
As a result, the quality of generated samples depends not only on model accuracy but also on the numerical solver used to approximate the reverse dynamics, which can significantly influence fidelity and stability.

Guidance mechanisms emerged to improve both sampling fidelity and perceptual quality by introducing suitable proxies for steering the sampling trajectory.
The de facto standard, Classifier-Free Guidance (CFG)~\cite{ho2022classifier}, combines unconditional and conditional predictions to strengthen alignment and enhance image quality.
Predictor-Corrector Guidance (PCG)~\cite{predictor-corrector} further refines this view by interpreting CFG as a predictor–corrector update that extrapolates between these predictions.
Autoguidance (AG)~\cite{Karras2024autoguidance} follows a similar principle by contrasting outputs from models of different capacities, using their discrepancies to identify regions where model-induced errors are significant.
However, subsequent methods~\cite{kynkaanniemi2024applying, sadat2024eliminating, zheng2023characteristic, zhao2024dynamic} rely solely on such model-based differences, overlooking the numerical errors arising from the solver itself as potential guidance signals.
We observe that, in stiff regions of the diffusion ODE, the solver’s local truncation error (LTE) aligns with the dominant eigenvector of the drift's Jacobian, revealing a numerically grounded proxy distinct from model-space signals and motivating our approach.

In this work, we propose \textbf{E}mbedded \textbf{R}unge–\textbf{K}utta \textbf{Guid}ance (\textbf{ERK-Guid}), a novel approach that mitigates solver-induced local truncation error (LTE) during diffusion sampling by estimating the dominant eigenvector in stiff regions.
In diffusion ODEs, stiffness arises when drift directions change rapidly, and we observe that the resulting LTE consistently aligns with the dominant eigenvector under such conditions.
To exploit this property, we introduce two cost-free estimators derived from the Embedded Runge–Kutta (ERK) formulation: \emph{ERK solution difference} (between Heun and Euler solutions) and \emph{ERK drift difference} (between their corresponding drifts).
The ratio of their norms serves as \textbf{a stiffness estimator} to identify regions of high stiffness, while the ERK drift difference further provides \textbf{an eigenvector estimator} to approximate the dominant eigenvector direction.
ERK-Guid transforms theoretical insights on stiffness and dominant eigenvector alignment into a practical guidance mechanism. Ultimately, ERK-Guid provides a stable, cost-free proxy that effectively reduces solver-induced errors by applying guidance along this estimated eigenvector direction.

Our contributions are summarized as follows:
\begin{itemize}
    \item We introduce the Embedded Runge-Kutta Guidance (ERK-Guid), a stiffness-aware guidance method that leverages solver errors as informative signals for diffusion sampling.
    \item We propose cost-free estimators for stiffness detection and dominant eigenvector estimation, derived from ERK solution and drift differences to determine the guidance direction.
    \item We design a stabilized guidance scheme that bridges theoretical insights with practical robustness, ensuring robustness without additional network evaluations.
    \item We demonstrate that ERK-Guid provides an orthogonal guidance signal, integrates into Runge-Kutta-based solvers as a plug-and-play module, and improves upon strong baselines.
\end{itemize}
\section{Related Works}
\textbf{ODE solvers in diffusion models.}
A major research trend in diffusion models has been to accelerate sampling by improving ODE solvers that approximate the underlying probability flow dynamics more efficiently.
Early work such as DDIM~\cite{song2020denoising} reinterprets the stochastic sampling process of DDPM~\cite{ho2020denoising} as a deterministic ODE trajectory, enabling significantly fewer sampling steps without retraining.
Building on this view, PNDM~\cite{liu2022pseudo} introduces a pseudo-numerical multistep method that generalizes DDIM beyond first-order updates.
Subsequent studies further leverage the structure of diffusion dynamics.
DEIS~\cite{zhang2022fast} employs exponential integrators to reduce discretization error, DPM-Solver~\cite{lu2022dpm} derives high-order solvers with coefficients designed to minimize local truncation error, and UniPC~\cite{zhao2023unipc} unifies predictor–corrector schemes under a single framework.

More recently, DPM-Solver-v3~\cite{zheng2023dpm} incorporates empirical model statistics into solver parameterization to jointly address discretization and model approximation errors.
Bespoke solver approaches develop customized solver designs for a fixed pre-trained model, encompassing methods that optimize time-step schedules~\cite{xue2024accelerating} and solver parameters~\cite{wang2025differentiable, shaul2023bespoke}.
In contrast to these approaches, which redesign or replace the ODE solver, ERK-Guid operates in a fundamentally different regime. 
We keep the solver fixed and instead leverage the solver’s own error: the discrepancy between low- and high-order solver updates.
By using this discrepancy as a directional correction, ERK-Guid provides solver-aware guidance without modifying the solver’s numerical structure.

\textbf{Adaptive guidance computation.}
In diffusion models, Classifier-Free Guidance (CFG)~\cite{ho2022classifier} has become the de facto standard for improving fidelity and condition alignment by contrasting conditional and unconditional denoisers.
Despite its success, CFG often suffers from overshoot, loss of diversity, and entangled fidelity–diversity trade-offs, limiting its flexibility across noise levels.
Autoguidance (AG)~\cite{Karras2024autoguidance} improves robustness by replacing the unconditional branch with a weaker model, correcting model-induced errors without sacrificing variation.
Beyond these canonical approaches, several adaptive guidance strategies have been explored.
Guidance Interval~\cite{kynkaanniemi2024applying} activates guidance only at mid-range noise levels.
Other works~\cite{sadat2024eliminating,zheng2023characteristic} mitigate oversaturation under strong guidance, and DyDiT~\cite{zhao2024dynamic} dynamically adjusts model capacity across timesteps.
These advances demonstrate how CFG has shaped a broad family of model-based guidance mechanisms. 
In contrast, ERK-Guid employs a solver-driven proxy derived from ERK discrepancies, yielding an orthogonal guidance signal yet complementary to model-based guidance.
\section{Preliminaries}
\textbf{Denoising Diffusion Models.}
Denoising diffusion models~\cite{ho2020denoising,song2020score,karras2022elucidating} generate samples by simulating the reverse-time dynamics of a predefined stochastic differential equation (SDE).
The SDE gradually transforms the data distribution \( p_{\text{data}} \) into a perturbed distribution \( p(\mathbf{x}; \sigma) \). 
Following EDM2~\cite{karras2024analyzing}, the perturbed distribution is defined as the convolution of \( p_{\text{data}} \) with Gaussian noise~\cite{kynkaanniemi2024applying}, i.e., \(p(\mathbf{x}; \sigma) = p_{\text{data}}(\mathbf{x}) * \mathcal{N}(\mathbf{x}; \bm{0}, \sigma^2 \mathbf{I}),\) where \( \sigma \in [0, \sigma_{\text{max}}] \).

The reverse-time SDE can be equivalently reformulated as an ordinary differential equation (ODE)~\cite{song2020score}, leading to a deterministic sampling process $\mathbf{x}_0 \sim p_{\text{data}}$, that solves the following initial value problem:
\begin{align}    
    \label{Equation-ReverseODE}
    \frac{\text{d}\mathbf{x}_{\sigma}}{\text{d}{\sigma}}=\bm{f}(\mathbf{x}_{\sigma};\sigma)=-\sigma\nabla_{\mathbf{x}_{\sigma}}\log p(\mathbf{x}_{\sigma};\sigma),\quad
    \mathbf{x}_0 = \mathbf{x}_{\sigma_{\text{max}}} + \int_{\sigma_{\text{max}}}^{0} \bigg(\frac{\text{d}\mathbf{x}_\sigma}{\text{d}\sigma} \bigg)\text{d}\sigma,
\end{align}
where $\bm{f}(\mathbf{x}_{\sigma};\sigma)$ denotes the drift function of the ODE, and \(\mathbf{x}_\sigma\) refers to the trajectory of the sample as a function of the noise level \(\sigma\).
The drift function is typically approximated by the learned model $\bm{f}_\theta(\mathbf{x}_{\sigma};\sigma)$, which is trained via score-matching objectives~\cite{song2020score}.
Note that the initial state \( \mathbf{x}_{\sigma_{\text{max}}} \) can be approximately sampled from \( \mathcal{N}(\bm{0}, \sigma_{\text{max}}^2 \mathbf{I}) \) when \( \sigma_{\text{max}} \) is sufficiently large.

In practice, the ODE cannot be solved analytically; numerical solvers are employed.
To enable numerical integration, EDM2~\cite{karras2024analyzing} discretizes the interval into a sequence of noise levels $\{ \sigma_0, \dots, \sigma_N \}$, where $N$ is the total number of integration steps.
Note that $[\sigma_i,\sigma_{i+1}]$ denotes the $i$-th integration interval, with $\sigma_0 = \sigma_{\text{max}}$ and $\sigma_N = 0$.  
A numerical solver is then applied to each interval to approximate the following integration:
\begin{equation}
    \mathbf{x}_{\sigma_{i+1}} = \mathbf{x}_{\sigma_i} + \int_{\sigma_i}^{\sigma_{i+1}} \bm{f}(\mathbf{x}_{\sigma};\sigma)\text{d}\sigma.
\end{equation}
The Euler method~\cite{ODE-book1}, a widely used first-order solver, provides the following update and local truncation error (LTE):
\begin{align}    
    \mathbf{x}_{\sigma_{i+1}}^{\text{Euler}} &= \mathbf{x}_{\sigma_i} - h\bm{f}(\mathbf{x}_{\sigma_i};\sigma_i),\\
    \text{LTE}^\text{Euler}&=\mathbf{x}_{\sigma_{i+1}} -\mathbf{x}_{\sigma_{i+1}}^{\text{Euler}}= \mathcal{O}(h^2),
\end{align}
where \( h = \sigma_i-\sigma_{i+1} >0 \) refers the step size.
To reduce the local truncation error, higher-order solvers are commonly used.
Heun’s method~\cite{ODE-book1}, based on the trapezoidal rule~\cite{ODE-book1}, introduces a correction to the Euler estimate and effectively incorporates implicit integration.
Its update and LTE are given as follows:
\begin{align}    
    \mathbf{x}_{\sigma_{i+1}}^{\text{Heun}} &= \mathbf{x}_{\sigma_i} - \frac{h}{2}\big(\bm{f}(\mathbf{x}_{\sigma_i};\sigma_i)+\bm{f}(\mathbf{x}_{\sigma_{i+1}}^{\text{Euler}};\sigma_{i+1})\big),\\
    \text{LTE}^\text{Heun}&=
    \mathbf{x}_{\sigma_{i+1}} -\mathbf{x}_{\sigma_{i+1}}^{\text{Heun}}= \mathcal{O}(h^3).
\end{align}
This second-order Runge–Kutta method serves as the default solver throughout our work.

\textbf{Embedded Runge--Kutta pair.}
In Heun’s method, the Euler solution is computed first and then corrected.
Thus, we obtain two solutions of different orders (Euler of order~1 and Heun of order~2). 
This structure is referred to as an \emph{embedded Runge--Kutta pair}, and their solution difference $\Delta^\mathbf{x} := \mathbf{x}^{\text{Heun}}_{\sigma_{i+1}} - \mathbf{x}^{\text{Euler}}_{\sigma_{i+1}}$ is commonly used as a proxy for the LTE~\cite{ODE-book1}.
In this paper, we refer to this difference $\Delta^\mathbf{x}$ as the \textbf{ERK solution difference}.
We also define the \textbf{ERK drift difference} as the difference of the drift evaluated at the two solutions,
$
\Delta^{\mathbf{f}}
:= \bm{f}\!(\mathbf{x}^{\mathrm{Heun}}_{\sigma_{i+1}};\,\sigma_{i+1})
 - \bm{f}\!(\mathbf{x}^{\mathrm{Euler}}_{\sigma_{i+1}};\,\sigma_{i+1}).
$

\textbf{Stiffness.}
Stiffness refers to the presence of both fast and slow dynamics within an ODE system~\cite{ODE-book2}.
It is commonly encountered in physical simulations such as fluid dynamics.
To ensure stability, numerical solvers must reduce their step sizes, leading to increased function evaluations and higher computational cost.
To handle this, previous adaptive solvers~\cite{petzold1983automatic,shampine1979user} detect stiffness and dynamically adjust their integration behavior by refining step sizes or switching to implicit solvers. 
Classically, stiffness is quantified by spectral properties of the Jacobian of the drift, such as the ratio of the largest and smallest eigenvalue magnitudes or, more simply, the maximum eigenvalue in magnitude~\cite{ODE-book2}:  
\begin{align}
J(\mathbf{x}_{\sigma};\sigma) &:= \nabla_{\mathbf{x}_{\sigma}}\,\bm{f}\!\left(\mathbf{x}_{\sigma};\,\sigma\right), \\
\rho_\mathrm{stiff}\!\left(\mathbf{x}_{\sigma},\sigma\right)
&:= \max_{k}\; \bigl|\lambda_{k}\!\bigl(J(\mathbf{x}_{\sigma},\sigma)\bigr)\bigr|,
\end{align}
where $\lambda_{k}(J)$ denotes the \(k\)-th eigenvalue of the matrix \(J\). 
We denote by $\mathbf{v}_{\mathrm{stiff}}(\mathbf{x}_{\sigma},\sigma)$ a unit dominant eigenvector associated with $\rho_{\mathrm{stiff}}(\mathbf{x}_{\sigma},\sigma)$.  
\section{Method}
We start with theoretical analysis and empirical evidence to provide the insight that, in stiff ODEs, both the local truncation error (LTE) and the embedded Runge–Kutta (ERK) solution difference are aligned with the Jacobian’s dominant eigenvector (Section~\ref{Main-Method-AlignmentInsight}).
Based on this observation, we introduce cost-free estimators for stiffness and the dominant eigenvector (Section~\ref{Main-Method-Estimators}), and incorporate them into our guidance scheme, ERK-Guid (Section~\ref{Main-Method-ERK-Guid}).

\subsection{Alignment of LTE and ERK solution differences in Stiff ODEs}
\label{Main-Method-AlignmentInsight}

\textbf{Theoretical Insight.}
We assume that the score-based vector field of the diffusion model $\mathbf{x}_{\sigma}$ is well approximated by its local linearization around the current state $\mathbf{x}_{\sigma_i}$ when the step size $h:=\sigma_{i}-\sigma_{i+1}>0$ is sufficiently small:
\begin{align}
\frac{\mathrm{d}\mathbf{x}_\sigma}{\mathrm{d}\sigma}=\bm{f}(\mathbf{x}_{\sigma};\sigma)&\approx \bm{f}(\mathbf{x}_{\sigma_i};\sigma_i)+J(\mathbf{x}_{\sigma_i};\sigma_i)\,(\mathbf{x}_\sigma-\mathbf{x}_{\sigma_i}).
\end{align}
Let $J_{{\sigma_i}}$ and $\bm{f}_{{\sigma_i}}$ denote $J(\mathbf{x}_{\sigma_i};\sigma_i)$ and $\bm{f}({\mathbf{x}_{\sigma_i}};{\sigma_i})$, respectively.
From Eq.~\ref{Equation-ReverseODE}, the Jacobian is given by $J_{{\sigma_i}}=-\sigma_i\nabla_{\mathbf{x}_\sigma}^{2}\log p(\mathbf{x}_\sigma;\sigma)|_{\mathbf{x}_{\sigma_i}}$.
Assuming $-\sigma\log p(\mathbf{x}_{\sigma};\sigma)$ is $C^{2}$-smooth, its Hessian $J_{{\sigma_i}}$ is symmetric and thus admits the following eigendecomposition:
\begin{align}
    J_{{\sigma_i}}=V\Lambda V^{\top}, 
    \quad \text{s.t.} 
    \quad V^{\top}V=I, 
    \quad J_{{\sigma_i}}\mathbf{v}_k=\lambda_k\mathbf{v}_k, 
    \quad \forall k
\end{align}
where $V$ is the orthogonal matrix whose columns are the eigenvectors $\mathbf{v}_k$ of $J_{{\sigma_i}}$, $\Lambda$ is the diagonal matrix of the corresponding eigenvalues $\lambda_k$, and $I$ is the identity matrix.
Therefore, the single-step Euler update can be decomposed along the eigenbasis as:
\begin{align}
    \label{Equation-ExactEuler}
    \mathbf{x}_{\sigma_{i+1}}^{\mathrm{Euler}} -\mathbf{x}_{\sigma_i}=  - h\bm{f}_{{\sigma_i}}=-h{\sum}_k
\,\bigl\langle \bm{f}_{{\sigma_i}},\ \mathbf{v}_k \bigr\rangle~\mathbf{v}_k,
\end{align}
where $\langle \cdot,\cdot\rangle$ refers inner product.
Similarly, Heun's update step is given by
\begin{align}
\mathbf{x}^\mathrm{Heun}_{\sigma_{i+1}}-\mathbf{x}_{\sigma_i} 
= - \frac{h}{2}\big(\bm{f}_{{\sigma_i}}+\bm{f}_{\mathbf{x}_{\sigma_{i+1}}}^{\mathrm{Euler}}\big)
&\approx - \frac{h}{2}\big(\bm{f}_{{\sigma_i}}+\bm{f}_{{\sigma_i}}+J_{{\sigma_i}}(\mathbf{x}_{\sigma_{i+1}}^{\mathrm{Euler}} -\mathbf{x}_{\sigma_i})\big)\\
\label{Equation-ExactHeun}
&=
-h\sum_k\Bigl(1+\tfrac{1}{2}z_k\Bigr)\,\bigl\langle \bm{f}_{{\sigma_i}},\ \mathbf{v}_k \bigr\rangle~\mathbf{v}_k,
\end{align}
where $\bm{f}_{{\sigma_{i+1}}}^{\mathrm{Euler}}=\bm{f}(\mathbf{x}_{\sigma_{i+1}}^{\mathrm{Euler}};\sigma_{i+1})$ and $z_k=-h\lambda_k$.
The exact update can be approximated as follows:
\begin{align}
\label{Equation-ExactUpdate}
\mathbf{x}_{\sigma_{i+1}}-\mathbf{x}_{\sigma_i}
&\approx
-h\sum_k
\phi(z_k)\,\bigl\langle \bm{f}_{{\sigma_i}},\ \mathbf{v}_k \bigr\rangle~\mathbf{v}_k,
\end{align}
where the function $\phi(z) = \frac{\mathrm{e}^z - 1}{z}$ for $z \neq 0$ and $\phi(0) = 1$.
See Appendix~\ref{Appendix-Derivation-ExactUpdate} for the derivation.

Thus, by subtracting Eq.~\ref{Equation-ExactHeun} from Eq.~\ref{Equation-ExactUpdate} and Eq.~\ref{Equation-ExactEuler} from Eq.~\ref{Equation-ExactHeun}, respectively, the local truncation error of Heun’s method and the ERK solution difference can be written as:
\begin{align}
\label{Equation-ExactLTE}
\text{LTE}^\mathrm{Heun}&:=\mathbf{x}_{\sigma_{i+1}} -~ \mathbf{x}^{\mathrm{Heun}}_{\sigma_{i+1}}
\approx-h\sum_k\,\alpha(z_k)\bigl\langle \bm{f}_{{\sigma_i}},\ \mathbf{v}_k \bigr\rangle~\mathbf{v}_k,
\\
\Delta^\mathbf{x}&:=\mathbf{x}^{\mathrm{Heun}}_{\sigma_{i+1}} - ~\mathbf{x}^{\mathrm{Euler}}_{\sigma_{i+1}}
\approx-h\sum_k\,(\tfrac{1}{2}z_k)\bigl\langle \bm{f}_{{\sigma_i}},\ \mathbf{v}_k \bigr\rangle~\mathbf{v}_k,
\end{align}
where $\alpha(z)=\frac{\mathrm{e}^{z}-1}{z}-1-\tfrac{1}{2}z$ for $z \neq 0$ and $\alpha(0) = 0$. Figure~\ref{Figure-ScalingFunction} visualizes the behavior of $\alpha(z)$.

As $|z_k|=|h\lambda_k|$ increases, the weights $\tfrac{1}{2}z_k$ and $\alpha(z_k)$ associated with each eigenvector component also grow in magnitude, so that contributions from directions with large $|\lambda_k|$ come to dominate.
Consequently, both the local truncation error (LTE) and the ERK solution difference tend to align with the dominant eigenvector corresponding to the largest eigenvalue magnitude, specifically in stiff regions, i.e., regions with high stiffness.
Motivated by this observation, we estimate both stiffness and the dominant eigenvector from the ERK solution difference during sampling (Section~\ref{Main-Method-Estimators}), and when stiffness is sufficiently high, we apply guidance with Eq.~\ref{Equation-ExactLTE} (Section~\ref{Main-Method-ERK-Guid}).

\begin{figure*}[t]
    \centering
    \begin{minipage}{0.69\textwidth}
        \centering
        \includegraphics[width=\linewidth]{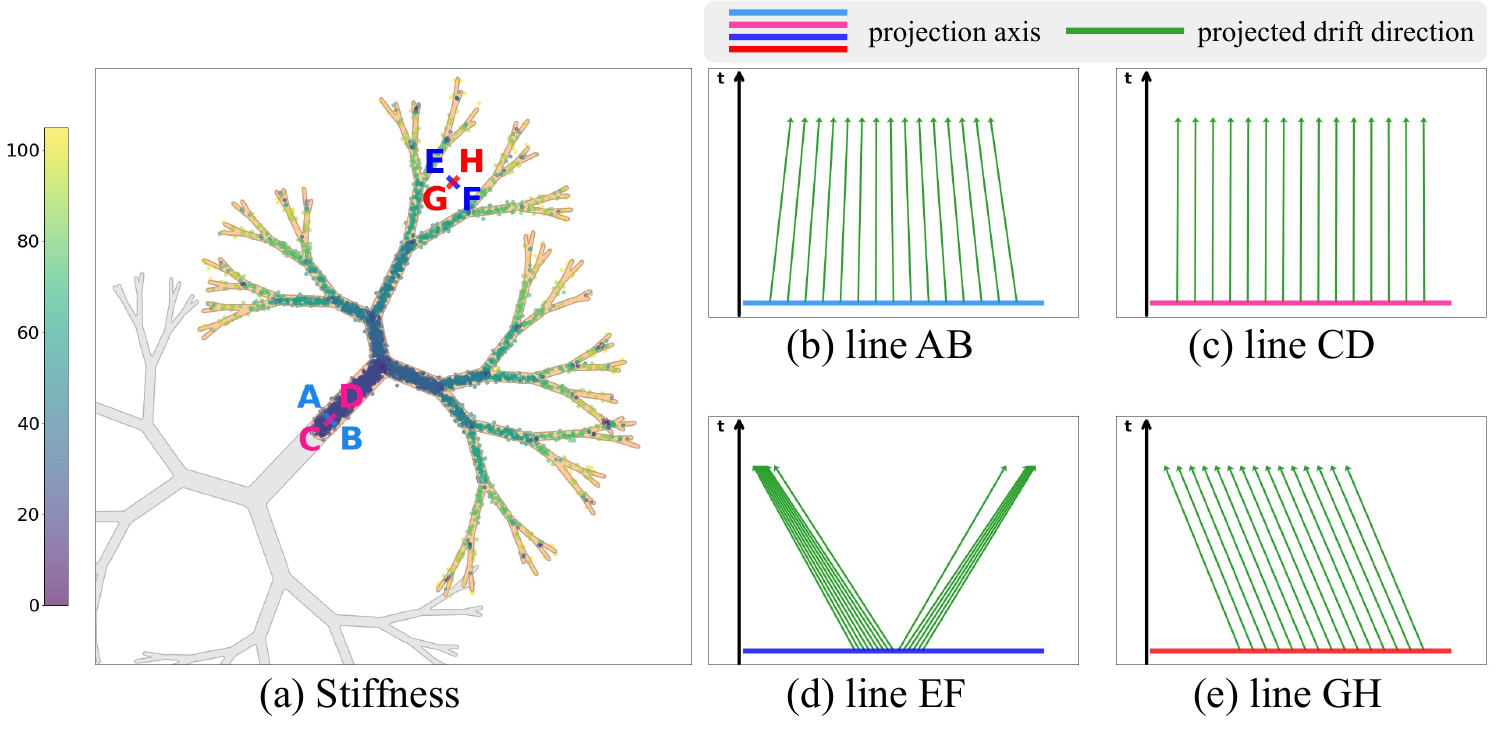}
    \end{minipage}\hfill
    \begin{minipage}{0.30\textwidth}
        \centering
        \captionof{table}{\textbf{Projection of local truncation error (LTE) and ERK solution difference onto eigenvector axes.}}
            \resizebox{\linewidth}{!}{
                \begin{tabular}{ccc}
                    \toprule
                         & LTE & ERK solution diff \\
                    \midrule
                     EF & \textbf{7.22e-05}  & \textbf{4.03e-04} \\
                    \midrule
                     GH & 5.78e-06 & 6.18e-05 \\
                    \midrule
                     Ratio & \textbf{12.5}$\times$ & \textbf{6.5}$\times$ \\
                     \bottomrule
                \end{tabular}
             }
            \label{Table-Toy2D-Visualization}
    \end{minipage}
    \caption{
        \textbf{Toy 2D example of stiffness and local drift variations.} 
    \textbf{(a)} Visualization of stiffness across the 2D plane, colored by magnitude. We sample a non-stiff region (lines AB and CD) and a stiff region (lines EF and GH). 
    \textbf{(b, c)} In the non-stiff region, the drift projected onto the local eigenbasis exhibits minimal variation along both axes, indicating stable dynamics. 
    \textbf{(d, e)} Conversely, in the stiff region, the projected drift shows pronounced variation along the \emph{dominant} eigenvector axis (d), while remaining relatively parallel along the \emph{subdominant} axis (e). 
    \textbf{Table~\ref{Table-Toy2D-Visualization}} shows that for a single step in the stiff region, both the local truncation error (LTE) and the ERK solution difference are heavily amplified along the dominant axis (EF) due to these drift variations. This strong alignment validates the ERK difference as a practical proxy for the severe LTE in our proposed ERK-Guid.
    }
    \label{Figure-Toy2D-Visualization}
\end{figure*}

\textbf{Toy 2D Experiments.}
\label{Main-Method-AlignmentInsight-Toy2D}
To better understand the connection between ODE dynamics and stiffness, we construct a two-dimensional toy system with an analytically defined ground-truth drift, followed by Autoguidance~\cite{Karras2024autoguidance}.
Experimental details, including the computation of eigenvectors, eigenvalues, and LTE, are provided in Appendix~\ref{Appendix-ExperimentalDetails}.
Figure~\ref{Figure-Toy2D-Visualization}(a) visualizes the degree of stiffness across the 2D plane.
To examine the local drift behavior, we sample a non-stiff region (lines AB and CD) and a stiff region (lines EF and GH).
For each region, we compute the Jacobian at its center and project the drift field onto its eigenvectors (eigenbasis), as shown in panels (b)–(e).
In the non-stiff region (panels (b) and (c)), the projected drift exhibits minimal variation along both axes, indicating locally stable dynamics that numerical solvers can easily approximate.
Conversely, in the stiff region, the drift shows pronounced variations along the \emph{dominant} eigenvector axis (panel (d), line EF), while remaining relatively parallel along the \emph{subdominant} axis (panel (e), line GH).
Consequently, this variation causes ODE solvers to incur significant local truncation errors (LTE) predominantly along the dominant eigenvector direction.
To verify this, we evaluate a single-step LTE using the center of the stiff region as the initial point.
As presented in Table~\ref{Table-Toy2D-Visualization}, the LTE projected onto the dominant eigenvector direction (line EF, 7.22e-05) is approximately \textbf{12.5$\times$} larger than that along the subdominant direction (line GH, 5.78e-06).

To mitigate the severe LTE occurring predominantly along the dominant eigenvector, we need a practical proxy to estimate this error direction.
For this, we introduce the ERK solution difference, which measures the gap between a higher-order and a lower-order solver starting from the same state.
Because a higher-order solver applies much larger corrections along the dominant axis where the drift varies drastically, this difference vector naturally aligns with that axis.
Table~\ref{Table-Toy2D-Visualization} confirms this intuition: the ERK solution difference along the dominant direction (line EF, 4.03e-04) is roughly \textbf{6.5$\times$} larger than that along the subdominant direction (line GH, 6.18e-05).
This motivates our core idea: utilizing the ERK difference to suppress severe errors in stiff regions.

For a broader analysis, we measure the cosine similarity of the dominant eigenvector with both the LTE and the ERK solution difference across different stiffness levels.
As shown in Figure~\ref{Figure-Toy2D-CosineSimilarity}~(a), the alignment between the LTE and the dominant eigenvector steadily increases with stiffness. This demonstrates that the eigenvector serves as a reliable proxy for the LTE direction in stiff regions.
Additionally, Figure~\ref{Figure-Toy2D-CosineSimilarity}~(b) illustrates that the ERK solution difference also strongly aligns with the dominant eigenvector when stiffness is high.
This crucial insight motivates our novel guidance strategy, \emph{ERK-Guid}, which leverages the dominant eigenvector estimated from the embedded Runge--Kutta (ERK) pair as its guiding signal.

We evaluate ERK-Guid against two widely used baselines: CFG~\cite{ho2022classifier} (conditional–unconditional score difference) and Autoguidance~\cite{Karras2024autoguidance} (main–weak model difference).
Figure~\ref{Figure-Toy2D-CosineSimilarity}~(c) shows that ERK-Guid consistently maintains strong alignment with the dominant eigenvector across all stiffness quantiles.
Notably, the performance gap widens in the high-stiffness bin (Q75–100), where CFG and Autoguidance exhibit weak or mixed alignment.
These results suggest that ERK-Guid provides an orthogonal guidance signal that effectively complements, rather than overlaps with, existing methods.

\subsection{Stiffness and Dominant Eigenvector Estimator}
\label{Main-Method-Estimators}
\begin{figure}[t!]
\centering
\includegraphics[width=1.0\columnwidth]{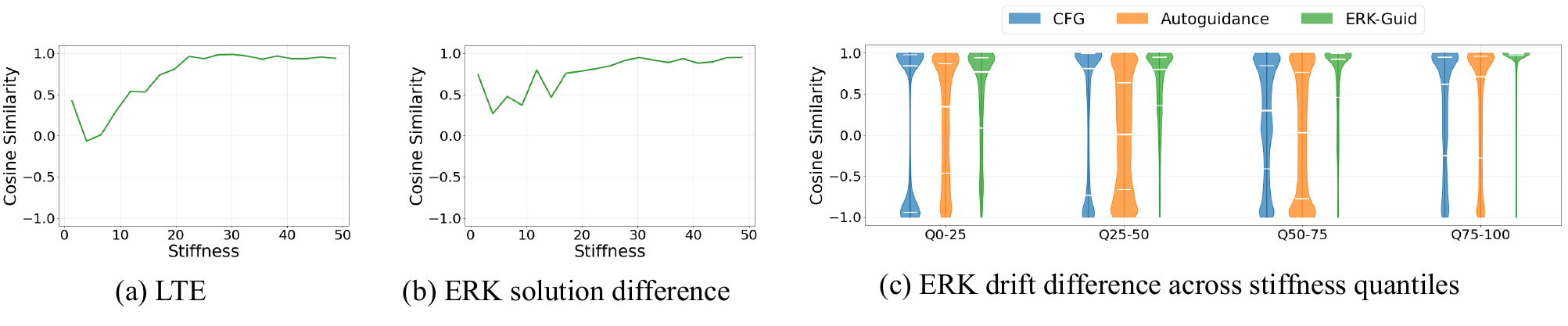}
\caption{
\textbf{Eigenvector alignment across stiffness.}
\textbf{(a)} Cosine similarity between the dominant eigenvector and the local truncation error (LTE) increases with stiffness. 
\textbf{(b)} The ERK solution difference exhibits a similar trend to LTE, suggesting it can serve as a reliable proxy for the LTE direction in high stiffness regions. 
\textbf{(c)} Our ERK-Guid consistently achieves higher cosine similarity with the eigenvector, highlighting its strong alignment in stiff regions.
CFG and Autoguidance exhibit weaker or mixed alignment with the dominant eigenvector in stiff regions, supporting the complementarity of our method.
}
\label{Figure-Toy2D-CosineSimilarity}
\end{figure}

The key intuition from the previous section is that, once stiffness is high, the dominant eigenvector provides a reliable proxy for reducing local truncation error (LTE). 
In practice, however, direct access to the Jacobian is infeasible, making stiffness estimation challenging.
A common alternative is to use Jacobian–vector product (JVP) based power iterations, which are supported in frameworks such as PyTorch but remain prohibitively expensive for diffusion sampling, where each step requires costly network evaluations. 
To overcome this, we propose cost-free estimators of stiffness and the dominant eigenvector, exploiting the ERK solution/drift difference without additional evaluations.

Let $\mathbf{x}_{\sigma_i}^{\mathrm{Heun}}$ and $\mathbf{x}_{\sigma_i}^{\mathrm{Euler}}$ denote the states at $\sigma_i$ computed by Heun's update and the intermediate Euler step from $\sigma_{i-1}$, respectively.
We define the stiffness estimator as  
\begin{equation}
\hat{\rho}_{\mathrm{stiff}}\!\left(\mathbf{x}_{\sigma_i}^{\mathrm{Heun}},\,\mathbf{x}_{\sigma_i}^{\mathrm{Euler}},\,\sigma_i\right)
:=
\frac{\big\|\,\bm{f}\!\left(\mathbf{x}_{\sigma_i}^{\mathrm{Heun}};\,\sigma_i\right)
- \bm{f}\!\left(\mathbf{x}_{\sigma_i}^{\mathrm{Euler}};\,\sigma_i\right)\big\|_2}
{\big\|\,\mathbf{x}_{\sigma_i}^{\mathrm{Heun}}-\mathbf{x}_{\sigma_i}^{\mathrm{Euler}}\big\|_2}.
\end{equation}
Here, $\mathbf{x}_{\sigma_i}^{\mathrm{Heun}}-\mathbf{x}_{\sigma_i}^{\mathrm{Euler}}$ corresponds exactly to the ERK solution difference, while $\bm{f}\!\left(\mathbf{x}_{\sigma_i}^{\mathrm{Heun}};\,\sigma_i\right)
- \bm{f}\!\left(\mathbf{x}_{\sigma_i}^{\mathrm{Euler}};\,\sigma_i\right)$ represents the ERK drift difference.

\begin{proposition}
\label{Proposition-Esimator-Stiffness}
Let $J_{\sigma_i}$ be the Jacobian matrix of the drift function $\bm{f}(\mathbf{x}_{\sigma}; \sigma)$ evaluated at $\mathbf{x}_{\sigma_i}^{\mathrm{Heun}}$.
Assume that $\bm{f}(\mathbf{x}_{\sigma}; \sigma)$ has a locally Lipschitz Jacobian near $\mathbf{x}_{\sigma_i}^{\mathrm{Heun}}$.
If the ERK solution difference $\Delta^\mathbf{x}:= \mathbf{x}_{\sigma_i}^{\mathrm{Heun}} - \mathbf{x}_{\sigma_i}^{\mathrm{Euler}}$ is a nonzero vector, sufficiently small, and aligned with the eigenvector associated with the dominant eigenvalue $\lambda$ of $J_{\sigma_i}$ in magnitude, such that
$$
    \| J_{\sigma_i}( \mathbf{x}_{\sigma_i}^{\mathrm{Heun}} - \mathbf{x}_{\sigma_i}^{\mathrm{Euler}} ) \|_2 = |\lambda| \| \mathbf{x}_{\sigma_i}^{\mathrm{Heun}} - \mathbf{x}_{\sigma_i}^{\mathrm{Euler}} \|_2 + \mathcal{O}(\| \Delta^\mathbf{x} \|^2_2),
$$
then the magnitude of the dominant eigenvalue, $|\lambda|$, admits the approximation
$$
    |\lambda| = \frac{\big\| \bm{f}(\mathbf{x}_{\sigma_i}^{\mathrm{Heun}};\sigma_i) - \bm{f}(\mathbf{x}_{\sigma_i}^{\mathrm{Euler}};\sigma_i) \big\|_2}{\big\| \mathbf{x}_{\sigma_i}^{\mathrm{Heun}} - \mathbf{x}_{\sigma_i}^{\mathrm{Euler}} \big\|_2} + \mathcal{O}(\| \Delta^\mathbf{x} \|_2).
$$
\end{proposition}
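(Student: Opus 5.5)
The plan is a first-order Taylor expansion of the drift around $\mathbf{x}_{\sigma_i}^{\mathrm{Heun}}$, followed by the triangle inequality. Write $\Delta^\mathbf{x} = \mathbf{x}_{\sigma_i}^{\mathrm{Heun}} - \mathbf{x}_{\sigma_i}^{\mathrm{Euler}}$, so that $\mathbf{x}_{\sigma_i}^{\mathrm{Euler}} = \mathbf{x}_{\sigma_i}^{\mathrm{Heun}} - \Delta^\mathbf{x}$. Since $\bm{f}(\cdot;\sigma_i)$ has a locally Lipschitz Jacobian near $\mathbf{x}_{\sigma_i}^{\mathrm{Heun}}$, I will use the integral form of Taylor's theorem:
\begin{align}
\bm{f}(\mathbf{x}_{\sigma_i}^{\mathrm{Heun}};\sigma_i) - \bm{f}(\mathbf{x}_{\sigma_i}^{\mathrm{Euler}};\sigma_i)
= \int_0^1 J\big(\mathbf{x}_{\sigma_i}^{\mathrm{Heun}} - t\Delta^\mathbf{x};\sigma_i\big)\,\Delta^\mathbf{x}\,\mathrm{d}t
= J_{\sigma_i}\Delta^\mathbf{x} + \mathbf{r},
\end{align}
where $\mathbf{r} = \int_0^1 \big(J(\mathbf{x}_{\sigma_i}^{\mathrm{Heun}} - t\Delta^\mathbf{x};\sigma_i) - J_{\sigma_i}\big)\Delta^\mathbf{x}\,\mathrm{d}t$. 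Let $L$ be the local Lipschitz constant of the Jacobian. Then $\|\mathbf{r}\|_2 \le \int_0^1 L t\|\Delta^\mathbf{x}\|_2^2\,\mathrm{d}t = \tfrac{L}{2}\|\Delta^\mathbf{x}\|_2^2$. This estimate is valid whenever $\Delta^\mathbf{x}$ is small enough that the whole segment stays in the neighborhood where the Lipschitz bound holds; that is exactly what the hypothesis ``sufficiently small'' is for.

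Next, take norms and apply the reverse triangle inequality:
\[
\Big|\,\big\|\bm{f}(\mathbf{x}_{\sigma_i}^{\mathrm{Heun}};\sigma_i) - \bm{f}(\mathbf{x}_{\sigma_i}^{\mathrm{Euler}};\sigma_i)\big\|_2 - \|J_{\sigma_i}\Delta^\mathbf{x}\|_2\,\Big| \le \tfrac{L}{2}\|\Delta^\mathbf{x}\|_2^2 .
\]
Substitute the alignment hypothesis $\|J_{\sigma_i}\Delta^\mathbf{x}\|_2 = |\lambda|\,\|\Delta^\mathbf{x}\|_2 + \mathcal{O}(\|\Delta^\mathbf{x}\|_2^2)$. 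This gives
\[
\big\|\bm{f}(\mathbf{x}_{\sigma_i}^{\mathrm{Heun}};\sigma_i) - \bm{f}(\mathbf{x}_{\sigma_i}^{\mathrm{Euler}};\sigma_i)\big\|_2 = |\lambda|\,\|\Delta^\mathbf{x}\|_2 + \mathcal{O}(\|\Delta^\mathbf{x}\|_2^2).
\]
Because $\Delta^\mathbf{x}\neq 0$, we can divide by $\|\Delta^\mathbf{x}\|_2$, which lowers the error order by one and yields the claimed approximation of $|\lambda|$ with an $\mathcal{O}(\|\Delta^\mathbf{x}\|_2)$ remainder.

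The only delicate point is bookkeeping for the constants. The implied constant in the final $\mathcal{O}$ comes from two sources: $L/2$ and the constant in the alignment hypothesis. Both must be uniform over the small neighborhood, so that dividing by $\|\Delta^\mathbf{x}\|_2$ leaves a genuine $\mathcal{O}(\|\Delta^\mathbf{x}\|_2)$ term and not something worse. I will state this explicitly by fixing a ball around $\mathbf{x}_{\sigma_i}^{\mathrm{Heun}}$ on which the Lipschitz bound holds, and restricting $\Delta^\mathbf{x}$ to lie in it.

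If time permits, I will add a remark. When $J_{\sigma_i}$ is symmetric and $\Delta^\mathbf{x} = c\,\mathbf{v} + \mathbf{w}$ with $\mathbf{v}$ the dominant eigenvector and $\|\mathbf{w}\|_2 = \mathcal{O}(\|\Delta^\mathbf{x}\|_2^2)$, the alignment hypothesis follows, because $\|J_{\sigma_i}\mathbf{w}\|_2 \le |\lambda|\,\|\mathbf{w}\|_2$.
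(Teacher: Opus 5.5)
Your proposal is correct and follows essentially the same route as the paper's proof: a first-order Taylor expansion of $\bm{f}(\cdot;\sigma_i)$ about $\mathbf{x}_{\sigma_i}^{\mathrm{Heun}}$ with an $\mathcal{O}(\|\Delta^\mathbf{x}\|_2^2)$ remainder, then the reverse triangle inequality, then the alignment hypothesis, then division by $\|\Delta^\mathbf{x}\|_2>0$. Your integral-form remainder with the explicit bound $\tfrac{L}{2}\|\Delta^\mathbf{x}\|_2^2$ only makes precise what the paper asserts in one line from the locally Lipschitz Jacobian.
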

Proposition~\ref{Proposition-Esimator-Stiffness} establishes that the proposed stiffness estimator accurately recovers the true stiffness under the assumption.
We provide the proof of Proposition~\ref{Proposition-Esimator-Stiffness} in Appendix~\ref{Appendix-Derivation-Proposition}.
Since our estimator relies on the alignment between the ERK solution difference and the dominant eigenvector, it is reliable only when the estimated stiffness is sufficiently high.
This requirement is explicitly incorporated into our guidance design (see Section~\ref{Main-Method-ERK-Guid}).
Notably, the stiffness estimator requires no additional network evaluations: $\mathbf{x}_{\sigma_i}^{\mathrm{Heun}}$, $\mathbf{x}_{\sigma_i}^{\mathrm{Euler}}$, and $\bm{f}(\mathbf{x}_{\sigma_i}^{\mathrm{Euler}};\sigma_i)$ are already obtained during the Heun correction, while $\bm{f}(\mathbf{x}_{\sigma_i}^{\mathrm{Heun}};\sigma_i)$ is reused in the subsequent Huen step.

As shown in Section~\ref{Main-Method-AlignmentInsight}, the ERK solution difference provides a useful proxy for the dominant eigenvector in stiff regions. 
To improve robustness, we define the dominant eigenvector estimator as normalized \emph{ERK drift difference}:
\begin{align}
\label{Equation-Eigenvector-Stiffness}
\hat{\mathbf{v}}_{\mathrm{stiff}}(\mathbf{x}_{\sigma_i}^{\mathrm{Heun}},\,\mathbf{x}_{\sigma_i}^{\mathrm{Euler}},\sigma_i)
:=
\frac{\bm{f}\!\left(\mathbf{x}_{\sigma_i}^{\mathrm{Heun}};\,\sigma_i\right)
- \bm{f}\!\left(\mathbf{x}_{\sigma_i}^{\mathrm{Euler}};\,\sigma_i\right)}
{\big\|\bm{f}\!\left(\mathbf{x}_{\sigma_i}^{\mathrm{Heun}};\,\sigma_i\right)
- \bm{f}\!\left(\mathbf{x}_{\sigma_i}^{\mathrm{Euler}};\,\sigma_i\right)\big\|_2}.  
\end{align}
Under a local linearization, this difference approximates the Jacobian applied to the ERK solution difference:
\[
\bm{f}\!\left(\mathbf{x}_{\sigma_i}^{\mathrm{Heun}};\,\sigma_i\right)
- \bm{f}\!\left(\mathbf{x}_{\sigma_i}^{\mathrm{Euler}};\,\sigma_i\right)
\;\approx\;
J_{\sigma_i}\!\left(\mathbf{x}_{\sigma_i}^{\mathrm{Heun}};\,\sigma_i\right)\,\big(\mathbf{x}_{\sigma_i}^{\mathrm{Heun}}-\mathbf{x}_{\sigma_i}^{\mathrm{Euler}}\big),
\]
which corresponds to a single-step JVP power iteration.
In the eigenbasis of $J_{\sigma_i}$, components with larger eigenvalues are amplified, effectively suppressing subdominant directions and naturally steering the estimate toward the dominant eigenvector.
We demonstrate this in Section~\ref{Main-Experiments-Estimators}.

\subsection{Embedded Runge--Kutta Guidance}
\label{Main-Method-ERK-Guid}
We now refine Eq.~\ref{Equation-ExactLTE} into a practical guidance scheme.
Leveraging cost-free estimates of stiffness and the dominant eigenvector, we construct a stabilized variant that ensures consistent updates across stiffness levels.
The modified formulation adaptively scales the correction according to the estimated stiffness and aligns the update with the dominant local dynamics.

Using the proposed estimators, we denote $\bm{f}_{{\sigma_i}}^{\mathrm{Heun}} := \bm{f}(\mathbf{x}_{\sigma_i}^{\mathrm{Heun}};\sigma_i)$, $\hat{\rho}_{{\sigma_i}} := \hat{\rho}_{\mathrm{stiff}}(\mathbf{x}_{\sigma_i}^{\mathrm{Heun}},\,\mathbf{x}_{\sigma_i}^{\mathrm{Euler}},\sigma_i)$, and $\hat{\mathbf{v}}_{{\sigma_i}} := \hat{\mathbf{v}}_{\mathrm{stiff}}(\mathbf{x}_{\sigma_i}^{\mathrm{Heun}},\,\mathbf{x}_{\sigma_i}^{\mathrm{Euler}},\sigma_i)$.
With step size \( h := \sigma_i - \sigma_{i+1} > 0\), we define the ERK-Guid update as:
\begin{align}
\label{Equation-ERK-Guid-Definition}
\hat{\mathbf{x}}_{\sigma_{i+1}}^{\mathrm{Heun}}
&= \mathbf{x}_{\sigma_{i+1}}^{\mathrm{Heun}}
- h\,\beta\,z^2\,
\big\langle \bm{f}_{{\sigma_i}}^{\mathrm{Heun}},\,\hat{\mathbf{v}}_{{\sigma_i}} \big\rangle\,
\hat{\mathbf{v}}_{{\sigma_i}},
\end{align}
where 
\(\beta = \mathbf{1}_{\{\hat{\rho}_{{\sigma_i}} > w_{\mathrm{con}}\}}\)
is a binary indicator that activates the guidance when the estimated stiffness exceeds the prescribed threshold \( w_{\mathrm{con}} \), and
\( z := w_{\mathrm{stiff}}\,h\,\hat{\rho}_{{\sigma_i}} \)
adaptively scales the guidance magnitude according to the estimated stiffness.
Here, \( w_{\mathrm{stiff}} \) is a hyperparameter that scales the overall guidance strength.
Importantly, this update can be equivalently written in the form of conventional guidance:
\begin{align}
    \label{Equation-ERK-Guid-GuidanceForm}
    \hat{\mathbf{x}}_{\sigma_{i+1}}^{\mathrm{Heun}}={\mathbf{x}}_{\sigma_{i+1}}^{\mathrm{Heun}}-\, h\,\gamma~ \left(
    \bm{f}_{\sigma_{i}}^{\mathrm{Heun}}
    -
    \bm{f}_{\sigma_{i}}^{\mathrm{Euler}}
    \right),
\end{align}
where $\gamma=\frac {\beta\ z^2 \,\big\langle \bm{f}_{{\sigma_i}}^{\mathrm{Heun}},\hat{\mathbf{v}}_{{\sigma_i}} \big\rangle}{\big\|\bm{f}_{{\sigma_i}}^{\mathrm{Heun}} - \bm{f}_{{\sigma_i}}^{\mathrm{Euler}}\big\|_2}$, 
and $\bm{f}_{{\sigma_i}}^{\mathrm{Euler}}=\bm{f}(\mathbf{x}_{\sigma_i}^{\mathrm{Euler}};\sigma_i)$.
This form reveals that ERK-Guid operates as a guidance term that extrapolates between two model predictions, analogous to conventional guidance mechanisms, while grounding the guidance direction in an estimator of the local truncation error.
Derivations from Eq.~\ref{Equation-ExactLTE} to Eq.~\ref{Equation-ERK-Guid-Definition} and from Eq.~\ref{Equation-ERK-Guid-Definition} to Eq.~\ref{Equation-ERK-Guid-GuidanceForm} are provided in Appendix~\ref{Appendix-Derivation-LTEtoERK-Guid} and Appendix~\ref{Appendix-Derivation-ERK-Guid-GuidanceForm}, respectively.

Compared to the exact LTE expression, our final formulation incorporates a confidence gate $\beta$ to restrict guidance to sufficiently stiff regions, as analyzed in Figure~\ref{Figure-GridSearch}.
It also includes a stiffness-dependent scaling controlled by the hyperparameter $w_{\mathrm{stiff}}$, which adjusts the overall correction strength, with no guidance applied when $w_{\mathrm{stiff}}=0$.
In addition, we replace $\alpha(z)$ with a quadratic form $z^2$, which prevents excessive amplification under inaccurate estimates while preserving smooth behavior near zero.
The choice of modulation is examined through ablations in Appendix~\ref{Appendix-ExperimentalDetails-AblationScalingFunction} and Figure~\ref{Figure-AblationScalingFunction}.
The complete procedure is summarized in Algorithm~\ref{Algorithm-ERK-Guid}.

\textbf{Computation cost.} ERK-Guid incurs no additional network evaluations: all required quantities are already computed during the Heun update.
Thus, unlike CFG or Autoguidance, our approach imposes no extra evaluation overhead and relies solely on the discrepancy between two solver orders.

\section{Experiments}
We evaluate ERK-Guid on both synthetic and real-world datasets.
On synthetic data, we validate our stiffness estimator against Jacobian Vector Product (JVP) references and compare ERK solution and drift differences for eigenvector estimation in Section~\ref{Main-Experiments-Estimators}.
In Section~\ref{Main-Experiments-Main}, we present quantitative results on real-world datasets, comparing our method against unguided sampling.
In Section~\ref{Main-Experiments-Adaptation}, we examine ERK-Guid’s compatibility with existing guidance methods and its plug-and-play adaptability across different solvers to demonstrate its versatility.
Furthermore, we compare ERK-Guid against classical stiffness-aware adaptive step-size control (Appendix~\ref{Appendix-ExperimentalDetails-AdaptiveStepsizeComparison}), which reduces the step size when high stiffness is detected, as well as predictor--corrector sampling (Appendix~\ref{Appendix-ExperimentalDetails-PredictorCorrectorComparison}), positioning our method against established numerical strategies.
Detailed experimental settings and computational overhead analysis are provided in Appendix~\ref{Appendix-ExperimentalDetails-MainExperiments}.

\begin{figure*}[t]
    \centering
    \begin{minipage}{0.48\textwidth}
        \centering
        \includegraphics[width=\linewidth]{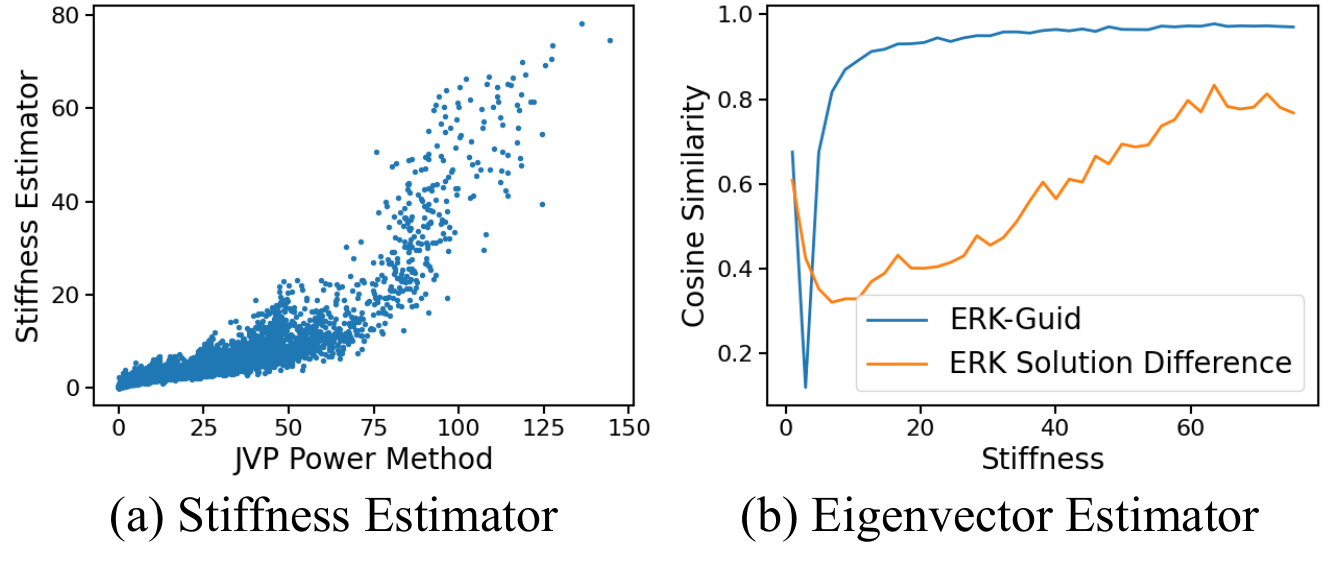}
        \caption{\textbf{Accuracy of proposed estimators.}
        \textbf{(a)} Our estimated stiffness values highly correlate with the JVP-based one.
        \textbf{(b)} ERK drift difference (blue) maintains higher alignment with the dominant eigenvector than the ERK solution difference (orange), especially at high stiffness.
        }
        \label{Figure-EstimatorAccuracy}
    \end{minipage}\hfill
    \begin{minipage}{0.48\textwidth}
        \centering
        \includegraphics[width=\linewidth]
        {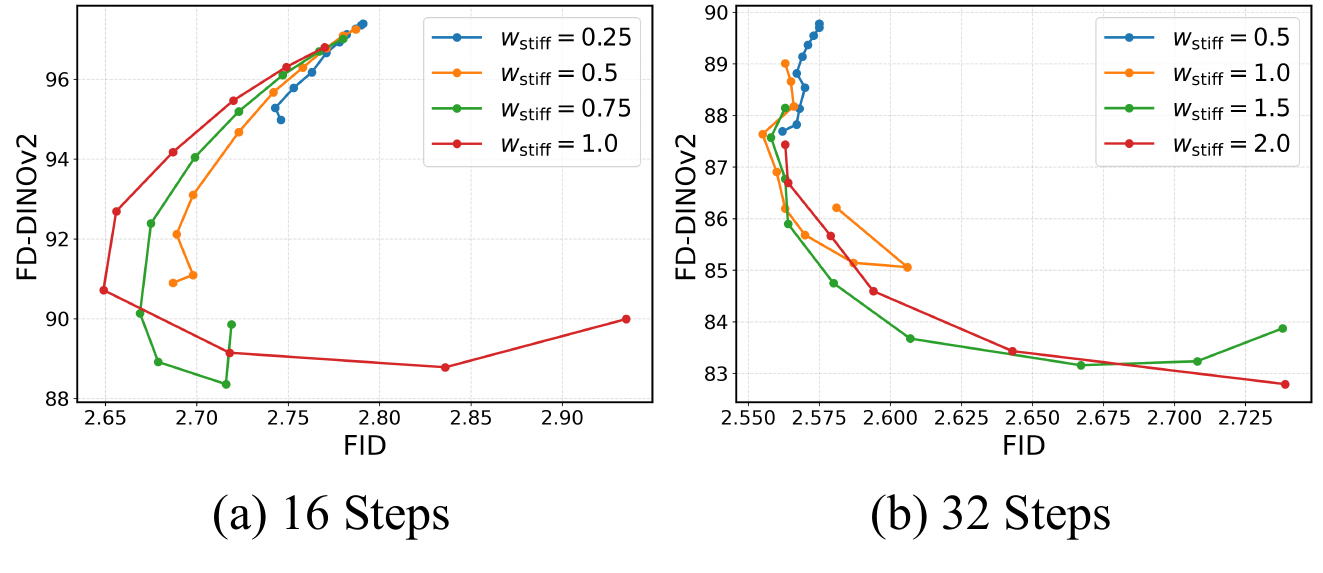}
        \caption{
            \textbf{Grid search of hyperparameters.}
            Quantitative trends of FID and FD-DINOv2 as $w_{\mathrm{con}}$ varies.
            Each curve corresponds to a different value of $w_{\mathrm{stiff}}$, with points indicating increasing $w_{\mathrm{con}}$, shown for (a) 16-step and (b) 32-step sampling.
        }
        \label{Figure-GridSearch}
    \end{minipage}
\end{figure*}

\begin{table*}[t]
\centering
\begin{minipage}{0.5\textwidth}
    \centering
    \caption{\textbf{Quantitative results on ImageNet-512.}
    Rows with $w_\text{stiff}=0$ indicate no guidance, while nonzero $w_\text{stiff}$ correspond to ERK-Guid.
    }
    \resizebox{\linewidth}{!}{
    \label{Table-MainTable}
        \begin{tabular}{c | c | c c c c c c }
        \toprule
         \#step& $w_\text{stiff}$&  FD-DINOv2 $\downarrow$& FID $\downarrow$ & Precision $\uparrow$& Recall $\uparrow$ & IS $\uparrow$\\
        \midrule
        32 &0.0 & 90.1 & 2.58 & 0.631 & 0.672 & 244  \\
        32 &0.5 & 88.9 & 2.57 & 0.632 & 0.673 & 245  \\
        32 &1.0 & 86.2 & \textbf{2.56} & \textbf{0.635} & 0.673 & 247  \\
        32 &1.5 & 83.7 & 2.60 &\textbf{0.635} & \textbf{0.675} & \textbf{249}  \\
        32 &2.0 & \textbf{82.8} & 2.74 & 0.632 & {0.674} & 247  \\
        32 &2.5 & 84.9 & 3.03 & 0.625 & 0.667 & 241  \\
        \midrule
        16 &0.0 & 97.4 & 2.79 & 0.629 & 0.652 & 238  \\
        16 &0.75 & \textbf{88.9} &\textbf{2.68}& \textbf{0.639} & \textbf{0.659} & \textbf{244}  \\
        \midrule
        8 & 0.0 & 161.2 & 7.06 & 0.446 & \textbf{0.614} & 183  \\
        8 & 0.5 & \textbf{136.9} &\textbf{4.91} & \textbf{0.545} & 0.598 & \textbf{201}  \\
        \bottomrule
        \end{tabular}
    }
\end{minipage}\hfill
\begin{minipage}{0.48\textwidth}
    \centering
    \caption{\textbf{
    Quantitative results of guidance compatibility on ImageNet-512.}
    ERK-Guid is combined with established guidance methods to enhance sampling quality and robustness under various guidance configurations.
    }
    \label{Table-GuidanceCompatibility}
    \resizebox{\linewidth}{!}{
    \begin{tabular}{c | c | c c c c c }
    \toprule
     \#step  & Method & FD-{DINOv2} $\downarrow$ & FID $\downarrow$& Precision $\uparrow$ & Recall $\uparrow$ & IS $\uparrow$\\
    \midrule
    \multirow{2}{*}{32}
       &  CFG & 88.5& \textbf{2.27} & 0.609& \textbf{0.707}& 271  \\
        & \cellcolor{erkbg} \textbf{+ERK-Guid}  & \cellcolor{erkbg} \textbf{{83.9}} & \cellcolor{erkbg} \textbf{2.27} & \cellcolor{erkbg} \textbf{0.612} & \cellcolor{erkbg} \textbf{0.707} & \cellcolor{erkbg} \textbf{275}  \\
    \midrule
    \multirow{2}{*}{32}
       &  Autoguidance & 50.4& \textbf{1.36} & 0.691 & {0.628} & 262 \\
       &  \cellcolor{erkbg} \textbf{+ERK-Guid} & \cellcolor{erkbg} \textbf{47.6}& \cellcolor{erkbg} \textbf{{1.36}}& \cellcolor{erkbg} \textbf{0.694} & \cellcolor{erkbg} \textbf{0.630} & \cellcolor{erkbg} \textbf{268}  \\
    \midrule
    \multirow{2}{*}{16}
       &  CFG & 133.9 & 3.61 & 0.593 & \textbf{0.673} & 210  \\
       &  \cellcolor{erkbg} \textbf{+ERK-Guid} & \cellcolor{erkbg} \textbf{125.6} & \cellcolor{erkbg} \textbf{3.20} & \cellcolor{erkbg} \textbf{0.606} & \cellcolor{erkbg} \textbf{0.673} & \cellcolor{erkbg} \textbf{215} \\
    \midrule
    \multirow{2}{*}{16}
       &  Autoguidance & 82.2 & 2.32 & 0.652 & 0.641 & 229  \\
       &  \cellcolor{erkbg} \textbf{+ERK-Guid}  & \cellcolor{erkbg} \textbf{75.2} & \cellcolor{erkbg} \textbf{1.93} & \cellcolor{erkbg} \textbf{0.669} & \cellcolor{erkbg} \textbf{0.644} & \cellcolor{erkbg} \textbf{236}  \\
    \bottomrule
    \end{tabular}}
\end{minipage}
\end{table*}

\begin{table}[ht!]
\centering
\caption{\textbf{Quantitative results of plug-and-play adaptation of ERK-Guid to solver methods on ImageNet-64 and FFHQ-64.}}
\label{Table-SolverAdaptation}
\resizebox{0.6\columnwidth}{!}{
\begin{tabular}{l|ccc|ccc}
\toprule
\multicolumn{1}{c|}{Dataset} 
& \multicolumn{3}{c|}{ImageNet 64$\times$64 (FID$\downarrow$)} 
& \multicolumn{3}{c}{FFHQ 64$\times$64 (FID$\downarrow$)} \\
\multicolumn{1}{c|}{NFEs} 
& 6 & 8 & 10
& 6 & 8 & 10 \\

\midrule

Heun~\cite{ODE-book1} &
89.63 & 37.65 & 16.46 &
142.4 & 57.21 & 29.54 \\

\rowcolor{erkbg} + \textbf{ERK-Guid} &
\textbf{85.19} & \textbf{35.93} & \textbf{13.85} &
\textbf{132.8} & \textbf{54.72} & \textbf{23.36} \\

\midrule
DPM-Solver~\cite{lu2022dpm} &
44.83 & 12.42 & 6.84 &
83.17 & 22.84 & 9.46 \\

\rowcolor{erkbg} + \textbf{ERK-Guid} &
\textbf{31.59} & \textbf{10.57} & \textbf{6.54} &
\textbf{49.0} & \textbf{10.42} & \textbf{4.64} \\

\midrule

DEIS~\cite{zhang2022fast} &
12.57 & 6.84 & 5.34 &
12.25 & 7.59 & 5.56 \\

\rowcolor{erkbg} + \textbf{ERK-Guid} &
\textbf{9.56} & \textbf{6.25} & \textbf{4.89} &
\textbf{9.96} & \textbf{6.04} & \textbf{4.46} \\
\bottomrule
\end{tabular}
}
\end{table}

\subsection{Accuracy of the Estimators}
\label{Main-Experiments-Estimators}
In Figure~\ref{Figure-EstimatorAccuracy}, we validate the accuracy of our estimators.
Figure~\ref{Figure-EstimatorAccuracy}(a) illustrates that the stiffness estimator shows a strong correlation with the JVP reference, increasing consistently with the reference values.
Also, Figure~\ref{Figure-EstimatorAccuracy}(b) demonstrates that the eigenvector estimator based on ERK drift differences exhibits higher alignment with the dominant eigenvector than the ERK solution difference, particularly in stiff regions.
These results confirm the reliability of our estimators for identifying the dominant eigenvector direction as guidance.

\subsection{Effectiveness of the Guidance}
\label{Main-Experiments-Main}
Table~\ref{Table-MainTable} summarizes quantitative results on ImageNet $512{\times}512$ with EDM2 and the Heun sampler.
We take $w_{\mathrm{stiff}}=0$ as the baseline without guidance.
As the guidance scale increases, FD-DINOv2 consistently decreases and reaches its best at $w_{\mathrm{stiff}}=2.0$, yielding 82.8 compared to the baseline 90.1.
Importantly, this fidelity gain is achieved while keeping FID competitive and consistently improving Precision, Recall, and Inception Score, indicating that our update strengthens fidelity without sacrificing diversity or alignment.
The advantage becomes more pronounced under fewer sampling steps, where local truncation errors dominate.
With 16 steps, FD-DINOv2 improves from 97.4 to 88.9 and FID from 2.79 to 2.68, accompanied by gains in Precision and Inception Score.
At 8 steps, the effect is even stronger: FD-DINOv2 drops from 161.2 to 136.9, FID from 7.06 to 4.91, with substantial boosts in Precision, Recall, and Inception Score.
Overall, these results demonstrate that ERK-Guid effectively mitigates error accumulation in stiff regions, delivering consistent improvements across settings and providing particular advantages in low-step regimes, all without any additional training or model evaluations.
Moreover, Figure~\ref{Figure-GridSearch} provides a grid-search analysis over $w_{\mathrm{stiff}}$ and threshold $w_{\mathrm{con}}$, illustrating robust trends across hyperparameter choices and confirming that ERK-Guid maintains stable improvements over a wide range of settings.
Qualitative results across varying $w_{\mathrm{stiff}}$ and $w_{\mathrm{con}}$ are provided in Appendix~\ref{Appendix-QualitativeResults}.

\begin{figure}[t!]
\centering
\includegraphics[width=0.8\columnwidth]{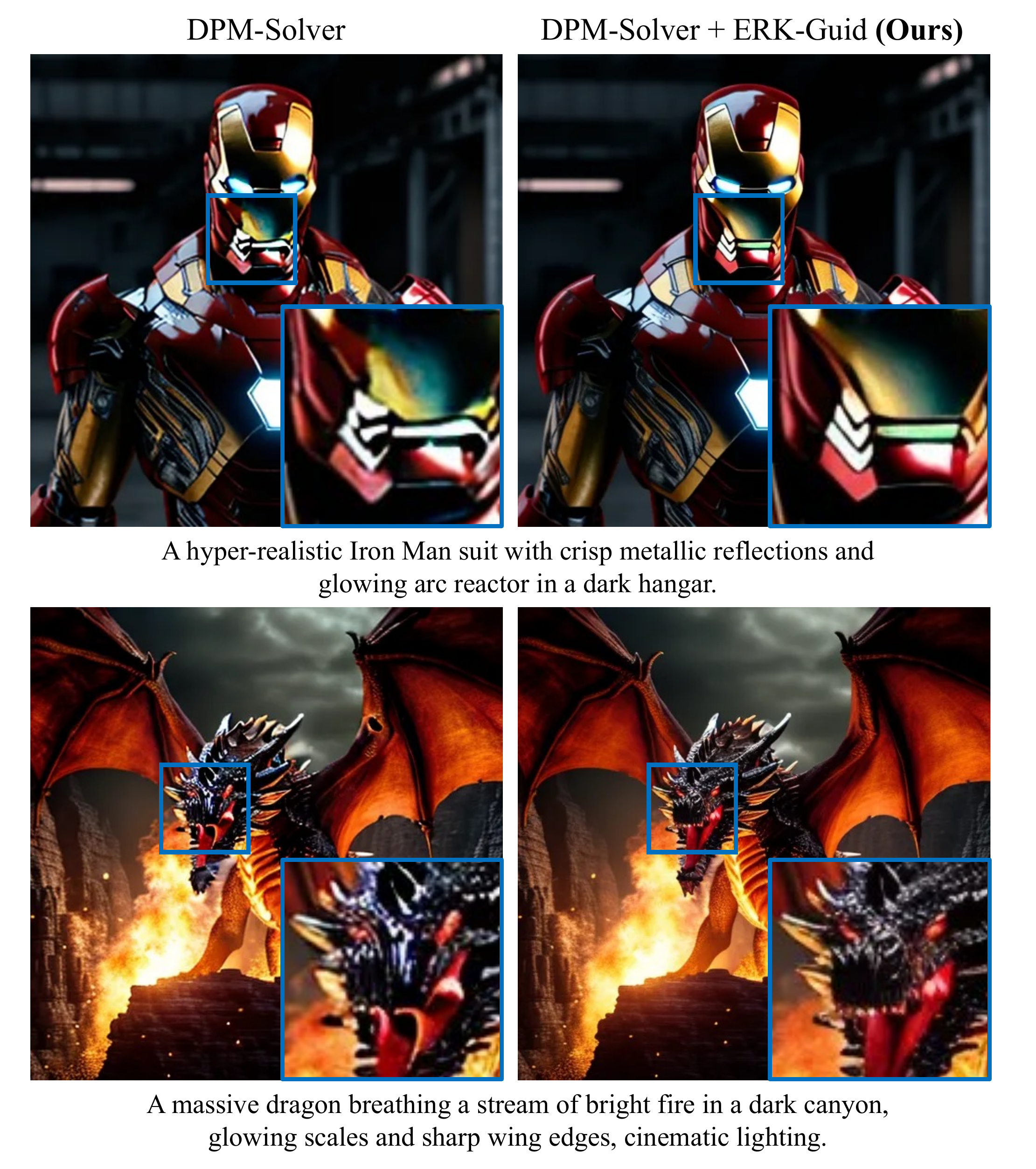}
\caption{
\textbf{Qualitative comparison on PixArt-$\alpha$~\cite{chen2023pixart}.}
We perform text-to-image generation to compare DPM-Solver with our ERK-Guid.
As shown in the blue zoomed-in regions, ERK-Guid captures fine semantic details more accurately.
}
\label{Figure-PixartQualitative}
\end{figure}

\subsection{Guidance Compatibility and Plug-and-play Solver Adaptation}
\label{Main-Experiments-Adaptation}
In this section, we highlight two key properties of ERK-Guid: (i) its compatibility with existing model-based guidance methods, and (ii) its plug-and-play adaptability to various solvers.

\textbf{Guidance compatibility.}
We examine whether ERK-Guid can be combined with existing guidance schemes.
Diffusion sampling errors arise from two sources: model error and solver error (LTE).
Under the predictor–corrector view~\cite{predictor-corrector}, guidance methods act as correctors. 
ERK-Guid targets local truncation error (LTE), whereas CFG and Autoguidance primarily address model error arising from imperfect score estimates.
Motivated by this complementarity, we combine ERK-Guid with CFG and Autoguidance in Table~\ref{Table-GuidanceCompatibility}, showing that our correction consistently strengthens model-based guidance and generalizes well to other guidance methods.
Additional details are provided in Appendix~\ref{Appendix-ExperimentalDetails-MainExperiments}.

\textbf{Plug-and-play adaptation.}
Similar to other guidance methods, ERK-Guid can be applied to various solvers as a plug-and-play correction module.
To demonstrate its effectiveness and broad applicability, we evaluate ERK-Guid on higher-order solvers, including Heun's method~\cite{ODE-book1}, DPM-Solver~\cite{lu2022dpm}, and DEIS~\cite{zhang2022fast} on ImageNet~\cite{deng2009imagenet} and FFHQ~\cite{karras2019style} at 64 $\times$ 64 resolution.
Table~\ref{Table-SolverAdaptation} demonstrates that combining ERK-Guid with solver methods consistently improves performance across all NFEs on both datasets.
These results highlight the robust plug-and-play capability of ERK-Guid and its effectiveness even when paired with higher-order ODE solvers.
Additional details are provided in Appendix~\ref{Appendix-ExperimentalDetails-SolverAdaptation} and Table~\ref{Table-SolverAdaptationDetails}.
Moreover, Figure~\ref{Figure-PixartQualitative} demonstrates that ERK-Guid delivers strong qualitative improvements on PixArt-$\alpha$~\cite{chen2023pixart}, built upon a Diffusion Transformer (DiT)~\cite{peebles2023scalable} backbone, further highlighting its architectural generalization capability.

\section{Conclusion}
In this work, we proposed ERK-Guid, a stiffness-aware diffusion sampling framework that exploits Embedded Runge-Kutta solver discrepancies as informative guidance signals.
Motivated by the alignment between local truncation errors and dominant eigenvectors in stiff regions, we introduce cost-free estimators for stiffness detection and eigenvector estimation based on ERK solution and drift differences, without additional network evaluations.
Building on these estimators, we design a stabilized guidance scheme that provides an orthogonal correction and integrates seamlessly as a plug-and-play module into model-error-based guidance schemes and Runge-Kutta-based solvers without modifying their core updates.
Extensive experiments demonstrate that ERK-Guid consistently improves sample quality and efficiency over strong baselines, and further show favorable comparisons against adaptive step-size control and predictor-corrector samplers.
Overall, ERK-Guid establishes a principled and practical framework for stiffness-aware guidance in diffusion models, opening new directions that bridge numerical analysis and generative modeling.

\clearpage
\subsubsection*{Reproducibility statement}
In Section~\ref{Main-Method-ERK-Guid}, we describe our pipeline design, and the Appendix~\ref{Appendix-Algorithm} provides algorithmic details and full pseudocode.
Appendix~\ref{Appendix-ExperimentalDetails} documents experimental settings and hyperparameters.  
We provide the official implementation at \url{https://github.com/mlvlab/ERK-Guid}.

\subsection*{Ethics statement}
Our method operates at the sampling stage of pretrained diffusion models and does not introduce new training data or modify model parameters.
As such, it inherits the potential risks of the underlying models, including biased, harmful, or inappropriate outputs depending on the conditioning input.
Users and practitioners should ensure appropriate monitoring and alignment measures when applying ERK-Guid.

\subsection*{Acknowledgement}
This research was supported by the ASTRA Project through the National Research Foundation (NRF) funded by the Ministry of Science and ICT (No. RS-2024-00439619, 50\%) and (No. RS-2024-00440063, 50\%).

\bibliography{iclr2026_conference}
\bibliographystyle{iclr2026_conference}

\clearpage
\appendix

\section{Derivation}
\subsection{Derivation of the Exact One Step Increment}
\label{Appendix-Derivation-ExactUpdate}
We provide a detailed derivation of Eq.~\ref{Equation-ExactUpdate}, i.e., the exact one-step increment under the local linearization. 
Recall that around $\mathbf{x}_{\sigma_i}$, the score-based vector field is approximated as
\begin{align}
\frac{\mathrm{d}\mathbf{x}_\sigma}{\mathrm{d}\sigma}
&\approx \bm{f}_{{\sigma_i}} + J_{{\sigma_i}}(\mathbf{x}_\sigma-\mathbf{x}_{\sigma_i}), 
\end{align}
where we denote $J_{{\sigma_i}}=J(\mathbf{x}_{\sigma_i};\sigma_i)$ and $\bm{f}_{{\sigma_i}}=\bm{f}(\mathbf{x}_{\sigma_i};\sigma_i)$. 
This yields a linear ODE of the form
\begin{align}
\frac{\mathrm{d}\mathbf{y}}{\mathrm{d}\sigma}
= J_{{\sigma_i}}\,\mathbf{y} + \bm{f}_{{\sigma_i}}, 
\quad \mathbf{y}(\sigma_i)=\mathbf{0}, 
\end{align}
where we introduced the shifted variable $\mathbf{y}(\sigma)=\mathbf{x}_{\sigma}-\mathbf{x}_{\sigma_i}$. 
Let $h:=\sigma_i-\sigma_{i+1}>0$ denote the step size. 
Our goal is to compute $\mathbf{y}(\sigma_i-h)=\mathbf{x}_{\sigma_{i+1}}-\mathbf{x}_{\sigma_i}$.

Assuming $-\sigma\log p(\mathbf{x}_{\sigma};\sigma)$ is $C^{2}$-smooth, its Hessian $J_{{\sigma_i}}$ is symmetric and thus admits the following eigendecomposition:
\begin{align}
    J_{{\sigma_i}}=V\Lambda V^{\top}, 
    \quad \text{s.t.} 
    \quad V^{\top}V=I, 
    \quad J_{{\sigma_i}}\mathbf{v}_k=\lambda_k\mathbf{v}_k, 
    \quad \forall k
\end{align}
where $V$ is the orthogonal matrix whose columns are the eigenvectors $\mathbf{v}_k$ of $J_{{\sigma_i}}$, $\Lambda$ is the diagonal matrix of the corresponding eigenvalues $\lambda_k$, and $I$ is the identity matrix.
 
Let us define the state projected onto the eigenbasis space:
\begin{align}
\bm{u}(\sigma) \;:=\; V^\top \bm{y}(\sigma),
\qquad
\bm{g} \;:=\; V^\top \bm{f}_{{\sigma_i}}, 
\qquad
g_k=\bm{g}^\top \bm{e}_k
=\langle \bm{f}_{{\sigma_i}}, \mathbf{v}_k \rangle,
\end{align}
where $\langle \cdot,\cdot\rangle$ refers to inner product, and $\bm{e}_k$ is the $k$-th standard basis vector.

Since $V$ is constant on this interval (by the local linearization assumption):
\begin{align}
\frac{\mathrm{d}\bm{u}}{\mathrm{d}\sigma}
= \frac{\mathrm{d}}{\mathrm{d}\sigma}\big(V^\top \bm{y}(\sigma)\big)
&= V^\top \frac{\mathrm{d}\bm{y}}{\mathrm{d}\sigma}\\
&= V^\top\!\left(J_{{\sigma_i}}\bm{y} + \bm{f}_{{\sigma_i}}\right) \\
&= V^\top (V\Lambda V^\top)\, \bm{y} \;+\; V^\top \bm{f}_{{\sigma_i}} \\
&= (V^\top V)\,\Lambda\,(V^\top \bm{y}) \;+\; \bm{g} \\
&= \Lambda \bm{u} + \bm{g}.
\end{align}
Therefore, the system \emph{decouples} into independent scalar ODEs because $\Lambda$ is diagonal.

Writing the $k$-th coordinate explicitly,
\begin{align}
\frac{\mathrm{d}u_k}{\mathrm{d}\sigma}
= \lambda_k u_k + g_k,
\qquad
u_k(\sigma_i) = 0.
\end{align}

It has the closed-form solution:
\begin{align}
u_k(\sigma) = (\sigma-\sigma_i)\, \phi\big(\lambda_k(\sigma-\sigma_i)\big)\, g_k,
\end{align}
where the function $\phi(z) = \frac{\mathrm{e}^z - 1}{z}$ for $z \neq 0$ and $\phi(0) = 1$.

Let $z_k:=\lambda_k(\sigma_{i+1}-\sigma_i)=-h\lambda_k$.
Since $\mathbf{y}=V\mathbf{u}$, we obtain
\begin{align}
\mathbf{x}_{\sigma_{i+1}}-\mathbf{x}_{\sigma_i}
= \mathbf{y}(\sigma_i-h)  
&=V\mathbf{u}(\sigma_i-h) 
\\
&=-hV\sum_{k}\phi(z_k)\, g_k\mathbf{e}_k
\\
&=-h\sum_{k}\phi(z_k)\, g_kV\mathbf{e}_k
\\
&= -h\sum_{k} \phi(z_k)\,
\langle \bm{f}_{{\sigma_i}}, \mathbf{v}_k \rangle\, \mathbf{v}_k,
\end{align}
which is the desired expression in Eq.~\ref{Equation-ExactUpdate}.

\subsection{Derivation of Proposition~\ref{Proposition-Esimator-Stiffness}}
\label{Appendix-Derivation-Proposition}
In Section~\ref{Main-Method-Estimators} of the main paper, we introduce a cost-free stiffness estimator that exploits the ERK difference without additional evaluations.
We provide the proof of Proposition~\ref{Proposition-Esimator-Stiffness}, which shows that the proposed estimator can approximate the magnitude of the dominant eigenvalue.
\setcounter{proposition}{0}
\begin{proposition} 

\end{proposition}

\begin{proof}
Under the assumptions of the proposition, a first-order Taylor expansion of $\bm{f}(\mathbf{x}_{\sigma}; \sigma)$ about $\mathbf{x}_{\sigma_i}^{\mathrm{Heun}}$ gives
\begin{equation}
\bm{f}\bigl({\mathbf{x}}_{\sigma_i}^{\mathrm{Euler}};\,\sigma_i\bigr)
=
\bm{f}\bigl({\mathbf{x}}_{\sigma_i}^{\mathrm{Heun}};\,\sigma_i\bigr)
- J_{\sigma_i}( \mathbf{x}_{\sigma_i}^{\mathrm{Heun}} - \mathbf{x}_{\sigma_i}^{\mathrm{Euler}} )
+\mathbf{e},
\label{Equation-Derivation-Proposition-Talyor}
\end{equation}
where the remainder $\mathbf{e}=\mathcal{O}\!\bigl(\|\Delta^\mathbf{x}\|^{2}_2\bigr)$ is bounded due to the locally Lipschitz Jacobian.

Rearranging Eq.~\ref{Equation-Derivation-Proposition-Talyor} to isolate the remainder $\mathbf{e}$ yields
\begin{equation}
J_{\sigma_i}( \mathbf{x}_{\sigma_i}^{\mathrm{Heun}} - \mathbf{x}_{\sigma_i}^{\mathrm{Euler}} ) 
- \bigl( \bm{f}({\mathbf{x}}_{\sigma_i}^{\mathrm{Heun}};\,\sigma_i) - \bm{f}({\mathbf{x}}_{\sigma_i}^{\mathrm{Euler}};\,\sigma_i) \bigr)
= \mathbf{e}.
\end{equation}
Taking the vector norm on both sides gives
\begin{equation}
\big\| J_{\sigma_i}( \mathbf{x}_{\sigma_i}^{\mathrm{Heun}} - \mathbf{x}_{\sigma_i}^{\mathrm{Euler}} ) 
- \bigl( \bm{f}({\mathbf{x}}_{\sigma_i}^{\mathrm{Heun}};\,\sigma_i) - \bm{f}({\mathbf{x}}_{\sigma_i}^{\mathrm{Euler}};\,\sigma_i) \bigr) \big\|_2
= \|\mathbf{e}\|_2.
\label{Equation-Derivation-Proposition-ErrorNorm}
\end{equation}
Applying the reverse triangle inequality $\big| \|\mathbf{a}\|_2 - \|\mathbf{b}\|_2 \big| \le \|\mathbf{a} - \mathbf{b}\|_2$ to Eq.~\ref{Equation-Derivation-Proposition-ErrorNorm}, we obtain
\begin{equation}
\Bigl| 
\| J_{\sigma_i}( \mathbf{x}_{\sigma_i}^{\mathrm{Heun}} - \mathbf{x}_{\sigma_i}^{\mathrm{Euler}} ) \|_2 
- 
\big\| \bm{f}({\mathbf{x}}_{\sigma_i}^{\mathrm{Heun}};\,\sigma_i) - \bm{f}({\mathbf{x}}_{\sigma_i}^{\mathrm{Euler}};\,\sigma_i) \big\|_2 
\Bigr|
\le
\|\mathbf{e}\|_2 = \mathcal{O}\!\bigl(\|\Delta^\mathbf{x}\|^{2}_2\bigr),
\end{equation}
which directly implies
\begin{equation}
\| J_{\sigma_i}( \mathbf{x}_{\sigma_i}^{\mathrm{Heun}} - \mathbf{x}_{\sigma_i}^{\mathrm{Euler}} ) \|_2
=
\big\|
  \bm{f}({\mathbf{x}}_{\sigma_i}^{\mathrm{Heun}};\,\sigma_i)
 - \bm{f}({\mathbf{x}}_{\sigma_i}^{\mathrm{Euler}};\,\sigma_i)
\big\|_2
+\mathcal{O}\!\bigl(\|\Delta^\mathbf{x}\|^{2}_2\bigr).
\label{Equation-Derivation-Proposition-NormDifference}
\end{equation}

Applying the alignment assumption to Eq.~\ref{Equation-Derivation-Proposition-NormDifference} yields
\begin{equation}
|\lambda|\,\| \mathbf{x}_{\sigma_i}^{\mathrm{Heun}} - \mathbf{x}_{\sigma_i}^{\mathrm{Euler}} \|_2
=
\big\|
  \bm{f}({\mathbf{x}}_{\sigma_i}^{\mathrm{Heun}};\,\sigma_i)
 - \bm{f}({\mathbf{x}}_{\sigma_i}^{\mathrm{Euler}};\,\sigma_i)
\big\|_2
+\mathcal{O}\!\bigl(\|\Delta^\mathbf{x}\|^{2}_2\bigr).
\end{equation}
Since $\| \mathbf{x}_{\sigma_i}^{\mathrm{Heun}} - \mathbf{x}_{\sigma_i}^{\mathrm{Euler}} \|_2 > 0$, dividing both sides by this norm produces the desired approximation
\begin{equation}
|\lambda|
=
\frac{\big\| \bm{f}({\mathbf{x}}_{\sigma_i}^{\mathrm{Heun}};\,\sigma_i) - \bm{f}({\mathbf{x}}_{\sigma_i}^{\mathrm{Euler}};\,\sigma_i) \big\|_2}{\| \mathbf{x}_{\sigma_i}^{\mathrm{Heun}} - \mathbf{x}_{\sigma_i}^{\mathrm{Euler}} \|_2}
+\mathcal{O}\!\bigl(\|\Delta^\mathbf{x}\|_2\bigr).
\end{equation}
\end{proof}

\subsection{Derivation of the ERK-Guid from Eq.~\ref{Equation-ExactLTE}}
\label{Appendix-Derivation-LTEtoERK-Guid}
We present a detailed derivation showing how the ERK-Guid (Eq.~\ref{Equation-ERK-Guid-Definition}) can be obtained from the local truncation error of Heun’s method (Eq.~\ref{Equation-ExactLTE}).
Let $\lambda_1$ and $\mathbf{v}_1$ denote the dominant eigenvalue and its corresponding eigenvector of the Jacobian $J_{\sigma_i}$, respectively.
As discussed in Section~\ref{Main-Method-AlignmentInsight}, when the dominant eigenvector $\mathbf{v}_1$ governs local dynamics, the LTE of Heun’s method is dominated along the direction of $\mathbf{v}_1$:
\begin{align}
    \text{LTE}^\text{Heun}:=\mathbf{x}_{\sigma_{i+1}} -~ \mathbf{x}^{\text{Heun}}_{\sigma_{i+1}}\approx-h\,\alpha(z_1)\bigl\langle \bm{f}_{{\sigma_i}},\ \mathbf{v}_1 \bigr\rangle~\mathbf{v}_1,
\end{align}
where $z_1 := -h \lambda_1$.
Rearranging terms gives
\begin{align}
    \label{Equation-LTEtoERK-Guid-RearrangedTerm}
    \mathbf{x}_{\sigma_{i+1}} \approx\mathbf{x}^{\text{Heun}}_{\sigma_{i+1}}-h\,\alpha(z_1)\bigl\langle \bm{f}_{{\sigma_i}},\ \mathbf{v}_1 \bigr\rangle~\mathbf{v}_1.
\end{align}
The Taylor expansion of $\alpha(z)$ at $z=0$ yields
\begin{align}
    \alpha(z_1)=\frac 1 6 z_1^2 + \mathcal{O}(z_1^3),
\end{align}
and substituting this Taylor approximation into Eq.~\ref{Equation-LTEtoERK-Guid-RearrangedTerm} gives
\begin{align}
    \mathbf{x}_{\sigma_{i+1}}\approx\mathbf{x}_{\sigma_{i+1}}^{\mathrm{Heun}}- \frac 1 6 h\,\,z_1^2\,\big\langle \bm{f}_{{\sigma_i}},\,{\mathbf{v}}_{1} \big\rangle\,{\mathbf{v}}_{_1}.
\end{align}
We interpret this additive term as a guidance correction and introduce a tunable scale $w_{\text{stiff}}$ as follows
\begin{align}
    \hat{\mathbf{x}}_{\sigma_{i+1}}^{\mathrm{Heun}}=\mathbf{x}_{\sigma_{i+1}}^{\mathrm{Heun}}- h\,\,z^2\,\big\langle \bm{f}_{{\sigma_i}},\,{\mathbf{v}}_{1} \big\rangle\,{\mathbf{v}}_{_1}, \quad z:=w_{\mathrm{stiff}}~h\lambda_1,
\end{align}
where constant $\frac 1 6$ is absorbed into $w_{\text{stiff}}$.

In practice, as the sampling proceeds sequentially, the initial point for the current step is from the previous step's Heun update. 
We denote the drift evaluated at this initial point as $\bm{f}_{{\sigma_i}}^{\mathrm{Heun}}$.
Furthermore, since neither the exact dominant eigenvector $\mathbf{v}_1$ nor the eigenvalue $\lambda_1$ is available, we replace them with our cost-free estimators $\hat{\mathbf{v}}_{\sigma_i}$ and $\hat\rho_{\sigma_i}$ evaluated at this state. This yields the practical ERK-Guid update:

\begin{align}
\hat{\mathbf{x}}_{\sigma_{i+1}}^{\mathrm{Heun}}=\mathbf{x}_{\sigma_{i+1}}^{\mathrm{Heun}}- h\,\,z^2\,\big\langle \bm{f}_{{\sigma_i}}^{\mathrm{Heun}},\,\hat{\mathbf{v}}_{{\sigma_i}} \big\rangle\,\hat{\mathbf{v}}_{{\sigma_i}},
\quad z := w_{\mathrm{stiff}}~h\hat\rho_{\sigma_i}.
\end{align}
Since the derivation assumes operation within stiff regions, we introduce a binary indicator $\beta$ that activates the guidance only when the estimated stiffness exceeds a threshold, i.e., $\hat{\rho}_{\sigma_i}>w_\text{con}$.
\begin{align}
    \hat{\mathbf{x}}_{\sigma_{i+1}}^{\mathrm{Heun}}=\mathbf{x}_{\sigma_{i+1}}^{\mathrm{Heun}}- h\,\beta\,z^2\,\big\langle \bm{f}_{{\sigma_i}}^{\mathrm{Heun}},\,\hat{\mathbf{v}}_{{\sigma_i}} \big\rangle\,\hat{\mathbf{v}}_{{\sigma_i}},
\quad z := w_{\mathrm{stiff}}~h\hat\rho_{\sigma_i}.
\end{align}

\subsection{Resemblance Between our ERK-Guid and Other Guidances}
\label{Appendix-Derivation-ERK-Guid-GuidanceForm}
We provide the alternative (but equivalent) formulation of our proposed method as a common form of guidance schemes.
In Eq.~\ref{Equation-ERK-Guid-Definition}, the ERK-Guid updates the Heun prediction as
\begin{align}
\hat{\mathbf{x}}_{\sigma_{i+1}}^{\mathrm{Heun}}
= \mathbf{x}_{\sigma_{i+1}}^{\mathrm{Heun}}
- h\,\beta\,z^2\,
\big\langle \bm{f}_{{\sigma_i}}^{\mathrm{Heun}},\,\hat{\mathbf{v}}_{{\sigma_i}} \big\rangle\,
\hat{\mathbf{v}}_{{\sigma_i}},
\end{align}
where $h =\sigma_i-\sigma_{i+1}$, $\beta = {1}_{\{\hat{\rho}_{\sigma_i} > w_{\mathrm{con}}\}}$, $z = w_{\mathrm{stiff}}\,h\,\hat{\rho}_{\sigma_i}$.
Substituting the definition of the eigenvector estimator $\hat{\mathbf{v}}_{{\sigma_i}}$ (Eq.~\ref{Equation-Eigenvector-Stiffness}) into the above ERK-Guid update yields
\begin{align}
    \hat{\mathbf{x}}_{\sigma_{i+1}}^{\mathrm{Heun}}=
\mathbf{x}_{\sigma_{i+1}}^{\mathrm{Heun}}
- 
\frac{h\,\beta\,z^2\,
\big\langle \bm{f}_{{\sigma_i}}^{\mathrm{Heun}},\,\hat{\mathbf{v}}_{{\sigma_i}} \big\rangle\,}
{\big\|\bm{f}_{{\sigma_i}}^{\mathrm{Heun}}
- \bm{f}_{{\sigma_i}}^{\mathrm{Euler}}\big\|_2}\bigg(\bm{f}_{{\sigma_i}}^{\mathrm{Heun}}
- \bm{f}_{{\sigma_i}}^{\mathrm{Euler}}\bigg).
\end{align}
By grouping the scalar coefficients into an adaptive scaling factor
\begin{align}
    \gamma(\mathbf{x}_{\sigma_i}^{\mathrm{Heun}},\mathbf{x}_{\sigma_i}^{\mathrm{Euler}},\sigma_i)=\frac {\beta\ z^2 \,\big\langle \bm{f}_{{\sigma_i}}^{\mathrm{Heun}},\,\hat{\mathbf{v}}_{{\sigma_i}} \big\rangle}{\big\|\bm{f}_{{\sigma_i}}^{\mathrm{Heun}}
- \bm{f}_{{\sigma_i}}^{\mathrm{Euler}}\big\|_2},
\end{align}
the ERK-Guid update can be rewritten as
\begin{align}
    \hat{\mathbf{x}}_{\sigma_{i+1}}^{\mathrm{Heun}}= \mathbf{x}_{\sigma_{i+1}}^{\mathrm{Heun}}-\,h\,\gamma(\mathbf{x}_{\sigma_i}^{\mathrm{Heun}},\mathbf{x}_{\sigma_i}^{\mathrm{Euler}},\sigma_i)
\bigg(
\bm{f}_{{\sigma_i}}^{\mathrm{Heun}}
- \bm{f}_{{\sigma_i}}^{\mathrm{Euler}}
\bigg).
\end{align}
The above equation reveals that our method admits an interpretation as a guidance term with an adaptive scaling factor $\gamma$.

\section{Experimental Details}
\label{Appendix-ExperimentalDetails}
\subsection{2D Toy Experiment}
{{
In Section~\ref{Main-Method-AlignmentInsight-Toy2D} of the main paper, we analyze a synthetic 2D distribution with analytically known scores to study how stiffness influences drift geometry and local truncation error.
Figure~\ref{Figure-Toy2D-Visualization} illustrates drift behavior in non-stiff and stiff regions, Table~\ref{Table-Toy2D-Visualization} presents the norms of the LTE and ERK solution difference when projected onto the dominant and subdominant eigenvector axes, and Figure~\ref{Figure-Toy2D-CosineSimilarity} presents quantitative alignment results.

\textbf{2D Visualization.}
The target distribution $p_{\text{data}}(\mathbf{x}|c)$ forms a tree-like structure in the 2D plane, with probability density concentrated along the branch centerlines.
Figure~\ref{Figure-Toy2D-Visualization}~(a) visualizes the degree of stiffness across the 2D plane at the 29th sampling step (out of 32).
To explicitly compare the local dynamics between non-stiff and stiff regions, we plot the drift field projected onto the local eigenbasis for both the non-stiff region (lines AB and CD) and the stiff region (lines EF and GH).

In the non-stiff region on the main trunk (panels (b) and (c), {lines AB and CD}), the density varies smoothly.
Thus, the projected drift exhibits minimal variation and consistently points toward the high-density interior, forming nearly parallel patterns.
Conversely, the stiff region is centered on the low-density gap between two diverging branches.
Panel (d) ({line EF}) traverses this gap between two sub-branches.
Because points E and F are located close to distinct high-density branches, the vectors in the gap split into two groups, pointing toward either E or F.
Panel (e) ({line GH}) aligns with the branch leading toward the central junction.
Since point G is closer to the high-density core branch than point H, the vectors flow uniformly from H toward G.
Note that lines EF and GH correspond to the dominant and subdominant eigenvector axes, respectively; this drift behavior naturally illustrates why drift variations are pronounced along the dominant axis, where the larger associated eigenvalue reflects higher sensitivity in the vector field across the density gap.

For Table~\ref{Table-Toy2D-Visualization}, we evaluate a single-step local truncation error (LTE) and the ERK solution difference using the center of the stiff region as the initial point.
The LTE is evaluated by comparing a single-step Heun solution with an approximate ground-truth solution obtained by subdividing the step into 100 finer substeps.
The ERK solution difference is computed by comparing the Heun and Euler methods starting from the same initial point.
Both the LTE and the ERK solution difference vectors are then projected onto the eigenvectors of the Jacobian evaluated at the starting point, effectively decomposing them along the dominant (EF) and subdominant (GH) axes.

\textbf{Eigenvector Alignment.}
We explicitly construct the Jacobian by applying Jacobian–vector products (JVPs) to one-hot basis vectors, and compute its eigenvalues and eigenvectors directly.  
Since eigenvectors are sign-ambiguous, we orient them to point in the same half-space as the drift at each state, following Eq.~\ref{Equation-ExactLTE}.  
Sampling is performed without any additional guidance and always with the ground-truth score function.  
Because the exact ground-truth solution is not available, we approximate it by subdividing each original step into 100 smaller substeps with a much finer step size. 
The local truncation error (LTE) is defined relative to the Heun method under this reference.  
Figures~\ref{Figure-Toy2D-CosineSimilarity} (a) and (b) are obtained by partitioning into stiffness bins and plotting the median cosine similarity within each bin.

\subsection{Main Experiments}
\label{Appendix-ExperimentalDetails-MainExperiments}

\textbf{Experimental setup.}
We conduct experiments on ImageNet (ILSVRC2012)~\cite{deng2009imagenet} at resolutions 512$\times$512 and 64$\times$64, as well as on the FFHQ~\cite{karras2019style} at 64$\times$64.
We use the pre-trained EDM~\cite{karras2022elucidating} and EDM2~\cite{Karras2024autoguidance} models.
Heun’s method is used as the base solver, and other solvers are incorporated through our plug-and-play module.
Evaluation metrics follow prior work: \emph{fidelity} via FD-DINOv2~\cite{stein2023exposing}, FID~\cite{heusel2017gans}, and Precision~\cite{kynkaanniemi2019improved}; \emph{diversity} via Recall~\cite{kynkaanniemi2019improved}; and \emph{condition alignment} using Inception Score (IS)~\cite{salimans2016improved}.
For all quantitative evaluations, we generate 50k samples per setting.

\textbf{Implementation details.}
We estimate the dominant eigenvector and the stiffness using JVP-based power iteration with a random initialization and 300 iterations per timestep.  
Figure~\ref{Figure-ConvergenceOfJVP} visualizes the per-timestep convergence and indicates that 300 iterations are sufficient.  
As in the toy setup, we fix the eigenvector orientation via Eq.~\ref{Equation-ExactLTE} so that it points toward the local drift; Figure~\ref{Figure-EstimatorAccuracy}(b) reports the median cosine similarity under this convention.
For Table~\ref{Table-MainTable}, we use the confidence threshold $w_{\mathrm{con}}=0.5$.
For Table~\ref{Table-GuidanceCompatibility}, we use $w_{\mathrm{con}}=0.5$, and set $w_{\mathrm{stiff}}=1.0$ and $0.75$ for sampling with 32 and 16 steps, respectively.
For Table~\ref{Table-SolverAdaptation}, we use $w_{\mathrm{con}}=0.05$, and $w_{\mathrm{stiff}}$ is specified in Table~\ref{Table-HyperparametersForSolverAdaptation}.
In Figure~\ref{Figure-PixartQualitative}, we conduct 15 sampling steps for text-to-image generation on PixArt-$\alpha$~\cite{chen2023pixart} which adopts Diffusion Transformer (DiT)~\cite{peebles2023scalable} architecture.

\textbf{Computational cost.}
All main experiments were run on $8\times$ NVIDIA RTX~3090 GPUs.
In Table~\ref{Table-ComputationalCost-WallClock} and Table~\ref{Table-ComputationalCost-Memory}, we evaluate the wall-clock time and memory overhead on the ImageNet 512$\times$512 dataset using a single RTX 3090 GPU with batch size 1, comparing ERK-Guid against Heun’s method. ERK-Guid incurs only a slight increase in wall-clock time, and its memory consumption remains identical to that of Heun’s method.

\begin{table}[ht!]
\centering
\caption{\textbf{Values of the guidance magnitude hyperparameter $w_{\mathrm{stiff}}$.}}

\label{Table-HyperparametersForSolverAdaptation}
\resizebox{0.6\columnwidth}{!}{
\begin{tabular}{l|ccc|ccc}
\toprule
\multicolumn{1}{c|}{Dataset} 
& \multicolumn{3}{c|}{ImageNet 64$\times$64} 
& \multicolumn{3}{c}{FFHQ 64$\times$64} \\

\multicolumn{1}{c|}{NFEs} 
& 6 & 8 & 10
& 6 & 8 & 10 \\

\midrule
Heun &
0.75 & 1.25 & 1.0 & 0.5 & 1.25 & 1.5 \\

\midrule
DPM-Solver &
1.25 & 1.25 & 1.25 & 1.25 & 1.25 & 1.25 \\

\midrule
DEIS &
2.5 & 2.5 & 2.5 & 3.5 & 3.5 & 3.5 \\ 

\bottomrule
\end{tabular}
}
\end{table}

\begin{table*}[t]
\centering
\begin{minipage}{0.45\textwidth}
    \centering
    \caption{\textbf{Wall-clock time (seconds per image) on a single RTX 3090 GPU.}}
    \label{Table-ComputationalCost-WallClock}
    \resizebox{\linewidth}{!}{
    \begin{tabular}{l|ccc}
    \toprule
    \textbf{Method} & \textbf{Avg} & \textbf{Min} & \textbf{Max} \\
    \midrule
    Heun &  2.777 &  2.775 &  2.782 \\
    \textbf{ERK-Guid} (Ours) & 2.794 & 2.785 & 2.811 \\
    \bottomrule
    \end{tabular}}
\end{minipage}
\hfill
\begin{minipage}{0.48\textwidth}
    \centering
    \caption{\textbf{Memory consumption for generating a single image.}}
    \label{Table-ComputationalCost-Memory}
    \resizebox{0.65\linewidth}{!}{
    \begin{tabular}{l|c}
    \toprule
    \textbf{Method} & \textbf{Avg} (MB) \\
    \midrule
    Heun & 1906.82 \\
    \textbf{ERK-Guid} (Ours) & 1906.82 \\
    \bottomrule
    \end{tabular}}
\end{minipage}
\end{table*}

\begin{figure*}[t]
    \centering
    \begin{minipage}{0.55\textwidth}
        \centering
        \includegraphics[height=3.4cm]{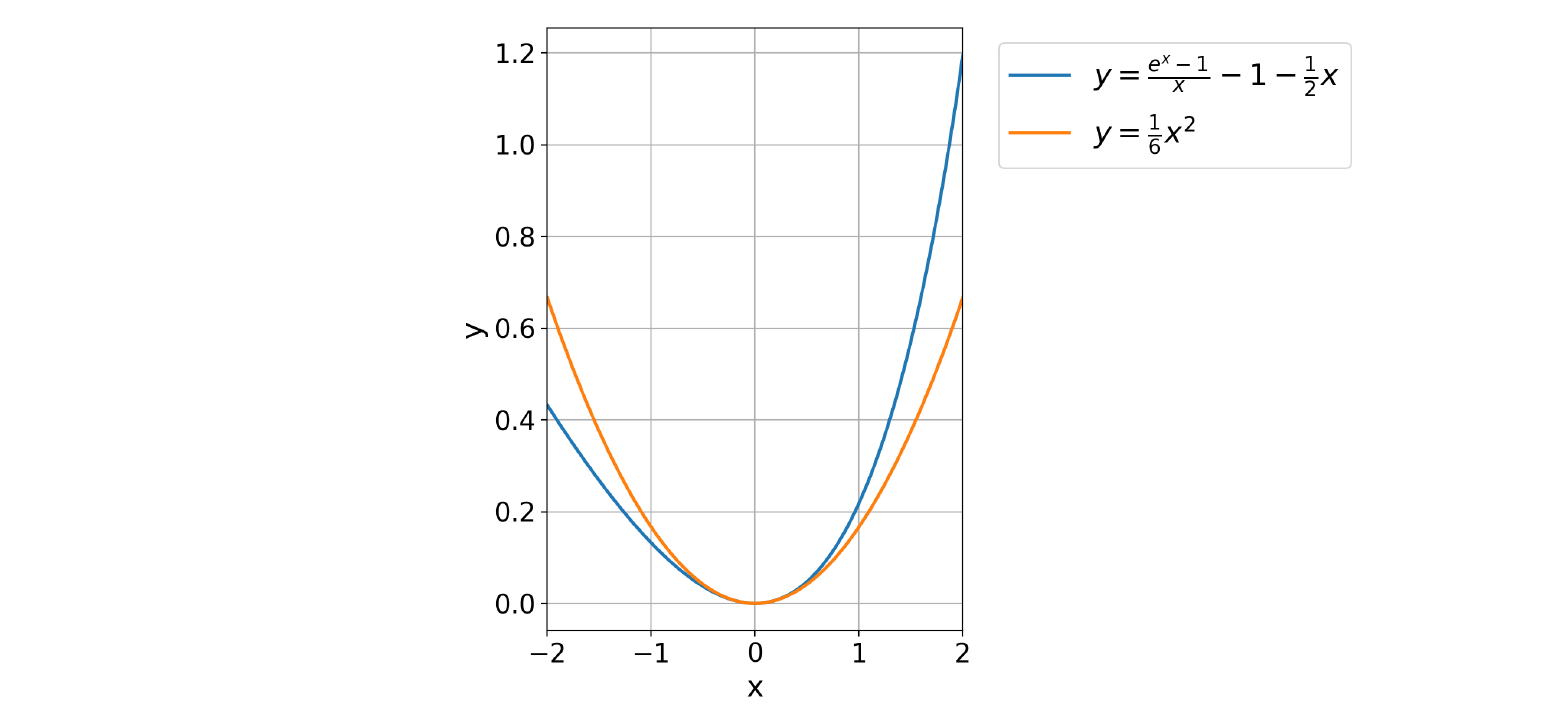}
        \subcaption{\textbf{The behavior of the function $y=\alpha(x)$ and $y=\frac{1}{6}x^2$.}}
        \label{Figure-ScalingFunction}
    \end{minipage}\hfill
    \begin{minipage}{0.45\textwidth}
        \centering
        \includegraphics[height=3.4cm]{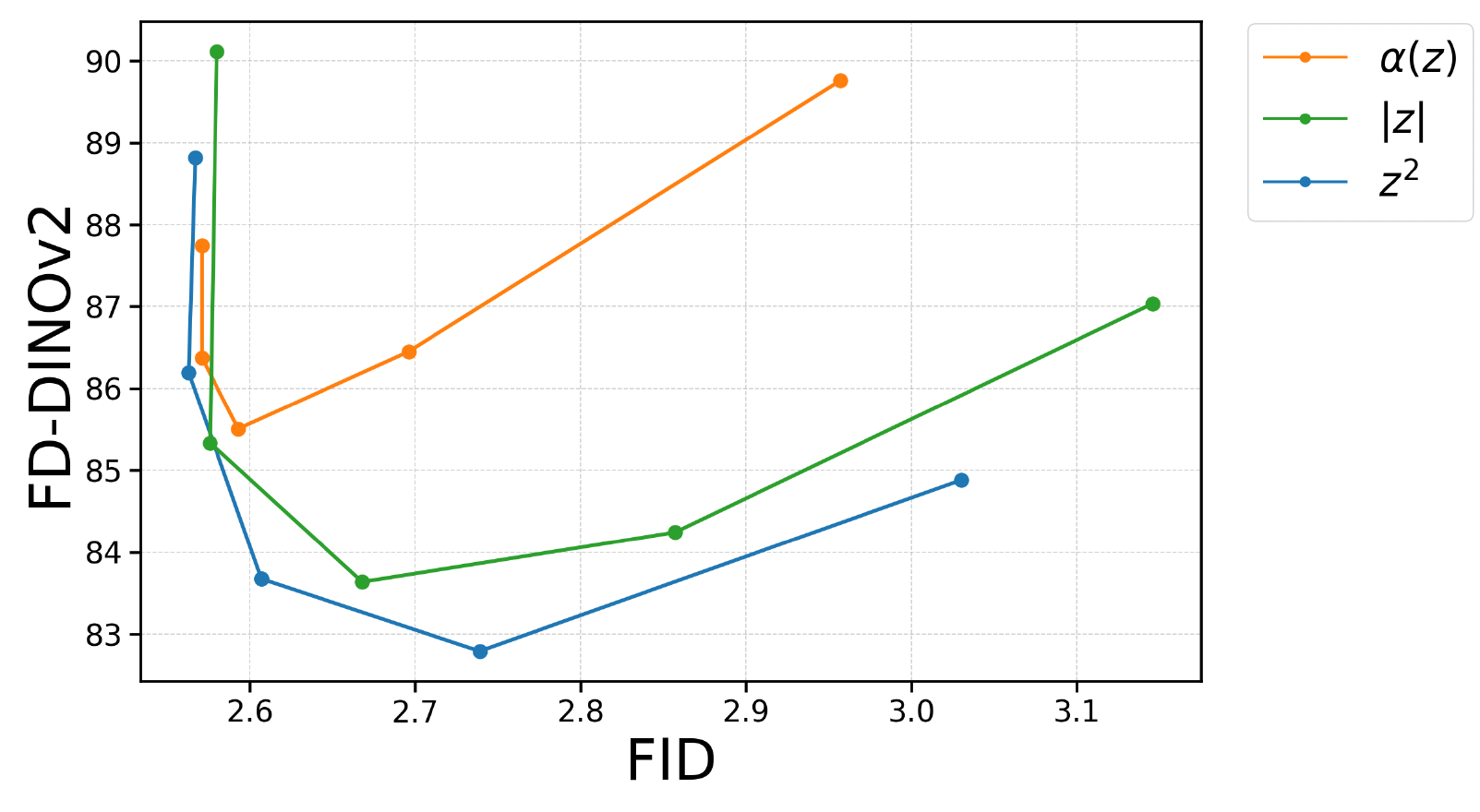}
        \subcaption{\textbf{Ablation on the scaling function of $z$.}}
        \label{Figure-AblationScalingFunction}
    \end{minipage}
    \caption{
        \textbf{Choice of ablation functions.}
        \textbf{(a)} Functional comparison of $\alpha(z)$ and its quadratic approximation. \textbf{(b)} Comparison of $\alpha(z)$, $|z|$, and $z^2$, under the FID–FD-DINOv2 trade-off, where the quadratic form $z^2$ achieves the most favorable balance between stability and fidelity.
    }
\end{figure*}

\subsection{Choice of Scaling Function}
\label{Appendix-ExperimentalDetails-AblationScalingFunction}
We were concerned about excessive amplification from the original factor $\alpha(z)$, whose exponential growth makes it highly sensitive to estimation errors.
To mitigate this issue, we consider replacing $\alpha(z)$ with lower-growth alternatives derived from its local behavior around $z=0$.
Specifically, we use the lowest-order nontrivial term of its Taylor expansion, yielding a \emph{quadratic} scaling $z^2$, as well as an even more conservative alternative, \( |z| \), which is symmetric and linear in the magnitude of $z$ (See Figure~\ref{Figure-ScalingFunction}).

In Figure~\ref{Figure-AblationScalingFunction}, we present the corresponding ablation results in terms of the FID–FD-DINOv2 trade-off. Compared to the original $\alpha(z)$, both alternatives improve stability by suppressing excessive growth.
Among them, $z^2$ consistently achieves lower FD-DINOv2 at comparable or better FID, providing the best trade-off between fidelity and stability.
In contrast, $\alpha(z)$ often degrades FD-DINOv2 due to amplification-induced instability, while \( |z| \) remains stable but yields more limited gains.
These results indicate that the quadratic form $z^2$ strikes a favorable balance by preserving the local behavior of $\alpha(z)$ while avoiding its detrimental exponential growth.

\subsection{Guidance Integration with CFG and Autoguidance}
In the main paper Section~\ref{Main-Experiments-Adaptation}, we introduce guidance compatibility of ERK-Guid and ERK-Proj.
We define the standard model-based guidance term, which applies to methods such as CFG and Autoguidance, as follows
\begin{align}
\label{Equation-BaselineGuidanceVector}
\bm{g} := \bm{f_\text{main}}\!\left(\mathbf{x}_{\sigma_i};\,\sigma_i\right)-\bm{f_\text{guiding}}\!\left(\mathbf{x}_{\sigma_i};\,\sigma_i\right).
\end{align}
To incorporate ERK-Guid, we simply replace the original drift $\bm{f}$ with the guided drift as follows
\begin{align}
\label{Equation-BaselineGuidanceForm}
\bm{f}^w\!\left(\mathbf{x}_{\sigma_i};\,\sigma_i\right):&=\bm{f_\text{main}}\!\left(\mathbf{x}_{\sigma_i};\,\sigma_i\right)+(w-1)\bm{g},
\end{align}
where $w$ is the scaling hyperparameter.
Then, we compute the eigenvector estimator and stiffness estimator as follows:
\begin{align}
\hat{\mathbf{v}}^w_{\sigma_i}
:&=
\frac{\bm{f}^w\!\left(\mathbf{x}_{\sigma_i}^{\mathrm{Heun}};\,\sigma_i\right)
- \bm{f}^w\!\left(\mathbf{x}_{\sigma_i}^{\mathrm{Euler}};\,\sigma_i\right)}
{\big\|\bm{f}^w\!\left(\mathbf{x}_{\sigma_i}^{\mathrm{Heun}};\,\sigma_i\right)
- \bm{f}^w\!\left(\mathbf{x}_{\sigma_i}^{\mathrm{Euler}};\,\sigma_i\right)\big\|_2},
\\
\hat{\rho}^w_{\sigma_i}
:&=
\frac{\big\|\,\bm{f}^w\!\left(\mathbf{x}_{\sigma_i}^{\mathrm{Heun}};\,\sigma_i\right)
- \bm{f}^w\!\left(\mathbf{x}_{\sigma_i}^{\mathrm{Euler}};\,\sigma_i\right)\big\|_2}
{\big\|\,\mathbf{x}_{\sigma_i}^{\mathrm{Heun}}-\mathbf{x}_{\sigma_i}^{\mathrm{Euler}}\big\|_2}.
\end{align}
Finally, we constitute our ERK-Guid as follows
\begin{align}
\hat{\mathbf{x}}_{\sigma_{i+1}}^{\mathrm{Heun}}
&= \mathbf{x}_{\sigma_{i+1}}^{\mathrm{Heun}}
- h\,\beta\,z^2\,
\big\langle \bm{f}_{{\sigma_i}}^w,\,\hat{\mathbf{v}}^w_{{\sigma_i}} \big\rangle\hat{\mathbf{v}}^w_{{\sigma_i}},
\end{align}
where $\beta = {1}_{\{\hat{\rho}^w_{\sigma_i} > w_{\mathrm{con}}\}}$ and $z = w_{\mathrm{stiff}}\,h\,\hat{\rho}^w_{\sigma_i}$.
\\
\textbf{Lightweight Extension of Guidance Integration.}
We additionally introduce ERK-Proj, which interpolates between model-error–based and LTE-based correction signals, aiming to reduce both errors simultaneously.
With $\mathbf{g}$ in Eq.~\ref{Equation-BaselineGuidanceVector}, ERK-Proj is defined as follows
\begin{align}    
\eta :&=e^{-w_{\mathrm{stiff}}\hat{\rho}_{\mathrm{stiff}}},
\\
\hat{\mathbf{g}}
:&=
\eta \mathbf{g}+\bigl(1 - \eta\bigr)\,\langle \mathbf{g}, \hat{\mathbf{v}}_{\sigma_i} \rangle\, \hat{\mathbf{v}}_{\sigma_i},
\\
\bm{f}^w\!\left(\mathbf{x}_{\sigma_i};\,\sigma_i\right):&=\bm{f_\text{main}}\!\left(\mathbf{x}_{\sigma_i};\,\sigma_i\right)+(w-1)\hat{\mathbf{g}},
\end{align}
where $\eta$ adjusts the guidance scaling by the stiffness estimator to interpolate the two guidance signals.
In Table~\ref{Table-ERK-Proj}, we report quantitative results demonstrating the effect of ERK-Proj when combined with Autoguidance.
\begin{table*}[t]
\centering
    \centering
    \caption{\textbf{
    Quantitative results of ERK-proj.}
    }
    \label{Table-ERK-Proj}
    \resizebox{0.6\linewidth}{!}{
    \begin{tabular}{c | c | c c c c c }
    \toprule
     \#step  & Method & FD-{DINOv2} $\downarrow$ & FID $\downarrow$& Precision $\uparrow$ & Recall $\uparrow$ & IS $\uparrow$\\
    \midrule
    \multirow{3}{*}{32}
       &  Autoguidance & 50.4& \textbf{1.36} & 0.691 & {0.628} & 262 \\
       &  \cellcolor{erkbg} \textbf{+ERK-Guid} & \cellcolor{erkbg} 47.6& \cellcolor{erkbg} \textbf{{1.36}}& \cellcolor{erkbg} 0.694 & \cellcolor{erkbg} \textbf{0.630} & \cellcolor{erkbg} 268  \\
       &  \cellcolor{erkbg} \textbf{+ERK-Proj} & \cellcolor{erkbg} \textbf{44.9}& \cellcolor{erkbg} \textbf{1.36} & \cellcolor{erkbg} \textbf{0.710} & \cellcolor{erkbg} {0.605} & \cellcolor{erkbg} \textbf{274}  \\
    \bottomrule
    \end{tabular}}
\end{table*}

\begin{table}[h!]
\centering
\caption{\textbf{Guidance configuration for Heun, DPM-Solver, and DEIS.}}
\label{Table-SolverAdaptationDetails}
\begin{tabular}{l|ccc}
\toprule
Solver &
Pair of states &
$h$ &
$\beta$ \\
\midrule

Heun &
$\mathbf{x}_{\sigma_i},\,\mathbf{x}_{\sigma_i}^{\mathrm{Euler}}$ &
$\sigma_i - \sigma_{i+1}$ &
$\{0,1\}$ \\

DPM-Solver (2S) &
$\mathbf{x}_{\sigma_i+\delta},\,\mathbf{x}_{\sigma_i}$ &
$\sigma_i - \sigma_{i+1}$ &
$\{0,1\}$ \\

DEIS &
$\mathbf{x}_{\sigma_i},\,\mathbf{x}_{\sigma_{i-1}}$ &
$\sigma_{i-1} - \sigma_i$ &
$\{0,-1\}$ \\
\bottomrule
\end{tabular}
\end{table}

\subsection{Plug-and-Play Module for Advanced Solvers}
\label{Appendix-ExperimentalDetails-SolverAdaptation}
In the main paper Section~\ref{Main-Experiments-Adaptation}, we present additional experimental results that confirm the effectiveness of our method combined with various solvers.
\begin{align}
    {\mathbf x}_{\sigma_{i+1}}^\text{solver}&=\text{solver}(\mathbf{x}_{\sigma_{i}},{\sigma_{i}},\bm{f})
\\
{\hat{\mathbf x}}_{\sigma_{i+1}}^\text{solver} &= {\mathbf x}_{\sigma_{i+1}}^\text{solver}-h
\beta z^2\big\langle \bm{f}({\mathbf{x}_{\sigma_i}},{\sigma_i}),\,\hat{\mathbf{v}}_{\sigma_i} \big\rangle\,\hat{\mathbf{v}}_{\sigma_i}
\end{align}
DPM-Solver (2S)~\cite{lu2022dpm} computes an intermediate state during its two-stage update.
We denote this intermediate state as $\mathbf{x}_{\sigma_i+\delta}$, and construct the pair as \{$\mathbf{x}_{\sigma_i+\delta},\,\mathbf{x}_{\sigma_i}$\}.
DEIS~\cite{zhang2022fast} computes its update using previous states in a multi-step formulation.
We use the most recent previous state $\mathbf{x}_{\sigma_{i-1}}$ to construct the pair as \{$\mathbf{x}_{\sigma_i},\,\mathbf{x}_{\sigma_{i-1}}$\}.

\begin{table}[!ht]
\centering
\caption{
\textbf{Evaluation of adaptive step-size control under different stiffness thresholds.}
(A) ERK-Guid and (B) Heun do not adopt adaptive step-size. 
(C–G) adapt step-size using thresholds $\tau = 0.5, 1, 2, 5, 10$.}

\label{Table-AdaptiveStepsizeComparison}
\begin{tabular}{c | c | l c c c}
\toprule
 & Adaptive step-size &
Threshold &
NFE (Avg.) $\downarrow$ &
FD-{DINOv2}$\downarrow$ &
FID $\downarrow$ \\
\midrule

(A) \textbf{ERK-Guid} (Ours) & \xmark  & \quad~~-- &
\textbf{63} & \textbf{86.2} & \textbf{2.56} \\

(B) Heun & \xmark & \quad~~-- & \textbf{63} & 90.1 & 2.58 \\

(C) & \cmark & $\tau=0.5$ & 90.7 & 88.9 & 2.57 \\

(D) & \cmark & $\tau=1.0$ & 85.5 & 89.4 & 2.57 \\

(E) & \cmark & $\tau=2.0$ & 79.5 & 90.0 & 2.58 \\

(F) & \cmark & $\tau=5.0$ & 67.6 & 90.1 & 2.58 \\

(G) & \cmark & $\tau=10.0$ & 64.5 & 90.1 & 2.58 \\
\bottomrule
\end{tabular}
\end{table}

\subsection{Comparison with Adaptive step size}
\label{Appendix-ExperimentalDetails-AdaptiveStepsizeComparison}
In Table~\ref{Table-AdaptiveStepsizeComparison}, we analyze our proposed method with a classical adoption of stiffness during diffusion sampling.
For the experimental setting, we utilize a pre-trained EDM2 network on ImagNet-512.
We use the same EDM discretization and halve the step size whenever the stiffness estimator exceeds thresholds of 0.5, 1, 2, 5, or 10.
Although a small threshold (e.g., $\tau = 0.5$) produced modest gains in FID and FD-DINOv2, it requires 1.44× more NFEs than ERK-Guid, making it substantially less efficient.
Our method still achieved the best performance and efficiency compared to the baseline with various adaptive step-size.

\begin{table}[t]
\centering
\caption{\textbf{Quantitative results of predictor-corrector and ERK-Guid (Ours).}}
\resizebox{0.6\textwidth}{!}{
\label{Table-PredictorCorrectorComparison}
\begin{tabular}{c | c | c c c c}
\toprule
Method & Stochasticity & $r$ & NFE & FD-DINOv2 $\downarrow$& FID $\downarrow$\\
\midrule
& \cmark & 0.05 & 64 & 91.5 & 2.65 \\
  & \cmark & 0.10 & 64 & 93.0 & 2.66\\
 Predictor-Corector & \cmark & 0.15 & 64 & 98.9 & 2.90 \\
 \cite{song2020score} & \xmark  & 0.01 & 64 & 90.6 & 2.61 \\
 & \xmark & 0.02 & 64 & 87.4 & 2.73 \\
 & \xmark & 0.03 & 64 & 88.6 & 3.33 \\
\midrule
ERK-Guid (\textbf{Ours}) & \xmark & - & \textbf{63} & \textbf{83.7} & \textbf{2.60} \\
\bottomrule
\end{tabular}
}
\end{table}

\subsection{Comparison with the predictor--corrector sampler}
\label{Appendix-ExperimentalDetails-PredictorCorrectorComparison}
We evaluate the predictor–corrector (PC) sampler from \cite{song2020score} by applying its corrector step after each Heun update.
Since the corrector in \cite{song2020score} is designed for the reverse SDE rather than an ODE, we also consider a deterministic variant obtained by removing its noise term.
We vary the hyperparameter $r$ and measure $\textrm{FID}$ and $\textrm{FD-DINOv2}$ under a comparable NFE.
In Table~\ref{Table-PredictorCorrectorComparison}, ERK-Guid achieves strong performance compared to both variants of the PC sampler.
The stochastic PC sampler consistently degrades performance across both metrics.
In contrast, the deterministic PC sampler of \cite{song2020score} exhibits a clear trade-off: as the correction scale $r$ increases, $\textrm{FD-DINOv2}$ increases while $\textrm{FID}$ decreases.

\section{Algorithm}
\label{Appendix-Algorithm}
In the main paper Section~\ref{Main-Method-ERK-Guid}, we introduce our ERK-Guid framework.
To provide a clearer step-by-step overview, we outline the full procedure in Algorithm~\ref{Algorithm-ERK-Guid}.
Note that the first sampling step does not admit this guidance mechanism, as it requires an ERK pair from the previous iteration.
Therefore, we simply skip guidance at this initial step, which is practically justified since stiffness at initialization is typically very small (i.e., $\beta \approx 0$).
\begin{algorithm}
\caption{Sampling procedure with \textbf{ERK-Guid}}
\label{Algorithm-ERK-Guid}
\begin{algorithmic}[1]
\Procedure{Ours}{$\bm{f}_\theta(\mathbf{x}; \sigma), \{\sigma_{i}\}_{i=0,\ldots,N},w_\text{stiff},w_\text{con},\epsilon$}
\State sample $\mathbf{x}_0 \sim \mathcal{N}(\mathbf{0}, \sigma_0^2 \mathbf{I})\in \mathbb{R}^{H\times W \times C}$
\For{$i \gets 0$ to $N-1$}
    \State $\bm{f}_{{i}} \gets 
    \bm{f}_\theta(\mathbf{x}_{i};\sigma_{i})$
    \State $h \gets \sigma_i -\sigma_{i+1}$
    \If{$i \ne 0$}
        \State $\Delta^{\mathbf{f}} \gets {\bm{f}_{{i}}-\bm{f}^\text{Euler}_{{i}}}$
        \State $\Delta^{\mathbf{x}} \gets {\mathbf{x}_{{i}}-\mathbf{x}^\text{Euler}_{{i}}}$
        \State $\hat{\mathbf{v}}_{{i}},~\hat\rho~ \gets \frac{\Delta^{\mathbf{f}}}{||\Delta^{\mathbf{f}}||},~\frac{||\Delta^{\mathbf{f}}||}{||\Delta^{\mathbf{x}}||}$
        \State $\beta,~z~\gets (\hat\rho >w_\text{con})~,~w_\text{stiff~}h\hat\rho$
        \State $\mathbf{g}_i \gets \beta ~z^2~(\bm{f}_{{i}}\cdot\hat{\mathbf{v}}_{{i}})~\hat{\mathbf{v}}_{{i}}$
    \Else
        \State $\mathbf{g}_i \gets \mathbf{0}$
    \EndIf

    \State $\mathbf{x}_{i+1}^{\text{Euler}} \gets \mathbf{x}_i -h \bm{f}_i$
    \If{$i \ne N$}
        \State $\bm{f}_{i+1}^\text{Euler} \gets \bm{f}_\theta(\mathbf{x}_{i+1}^\text{Euler};\sigma_{i+1})$
        \State $\mathbf{x}_{i+1}^{\text{Heun}} \gets \mathbf{x}_i - h \left( \frac{1}{2} \bm{f}_i + \frac{1}{2} \bm{f}_{i+1}^{\text{Euler}} \right)$
        \State $\mathbf{x}_{i+1} \gets \mathbf{x}_{i+1}^{\text{Heun}} -h \mathbf{g}_i$
        \State $\textbf{Q} \xleftarrow{\text{\scriptsize buffer}}  \mathbf{x}_{i+1}^{\text{Euler}}, \bm{f}_{i+1}^{\text{Euler}}$

    \Else
        \State $\mathbf{x}_{i+1} \gets \mathbf{x}_{i+1}^{\text{Euler}}$
    \EndIf
\EndFor
\State \Return $\mathbf{x}_N$
\EndProcedure
\end{algorithmic}
\end{algorithm}

\begin{figure}[t!]
\centering
\includegraphics[width=0.7\columnwidth]{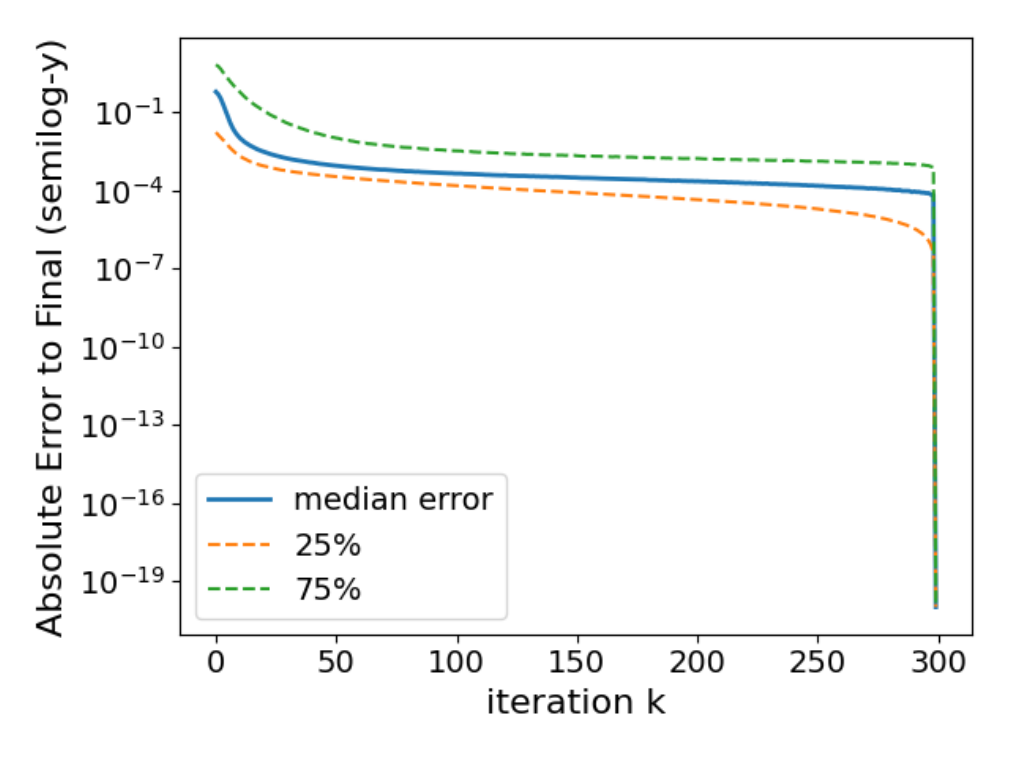}
\caption{
\textbf{Convergence of stiffness estimation under JVP power iteration.}
The plot shows the absolute error between the estimated stiffness at iteration k and the final converged value. 
The solid line denotes the median across all seeds and timesteps, while the dashed lines indicate the 25th and 75th percentiles. 
Errors decrease rapidly and stabilize, demonstrating reliable convergence of the iteration procedure.
}
\label{Figure-ConvergenceOfJVP}
\end{figure}

\section{LLM Usage}
We only used a large language model as a writing assistant to refine phrasing, grammar, and clarity. 
It was not used for technical content, experiments, analyses, or results.

\section{Qualitative results}
\label{Appendix-QualitativeResults}
In this section, we present qualitative results of ERK-Guid on ImageNet $512{\times}512$. Figures~\ref{Figure-QualitativeResults-Goat} and \ref{Figure-QualitativeResults-Dumbbell} show that applying ERK-Guid enhances image fidelity when sufficient guidance is applied. These results demonstrate that our method effectively mitigates solver-induced errors during conditional updates along the sampling trajectory.

\begin{figure}[t!]
\centering
\includegraphics[width=0.8\columnwidth]{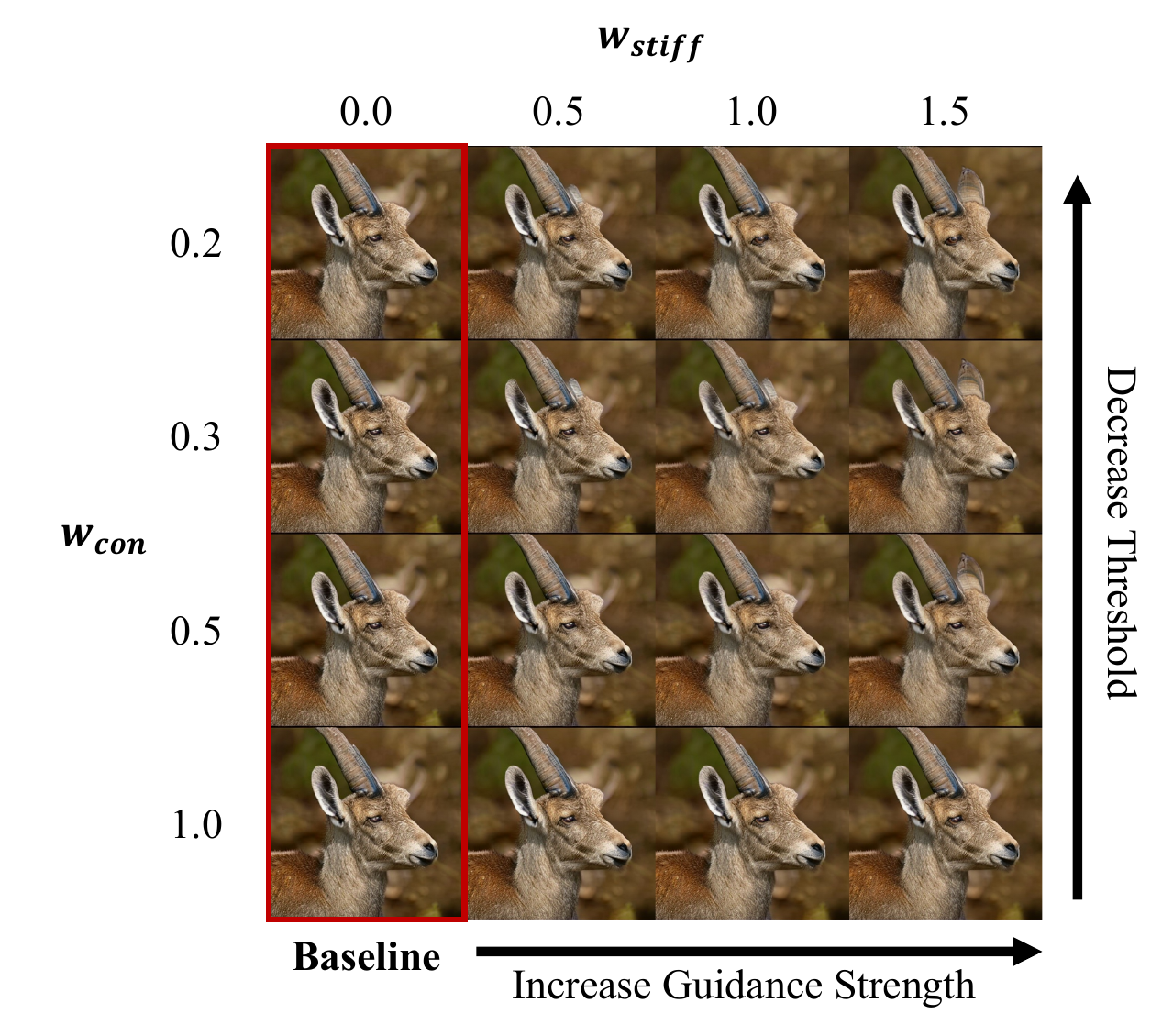}
\caption{\textbf{Qualitative results of ERK-Guid across adjusting guidance scale.}
}
\label{Figure-QualitativeResults-Goat}
\end{figure}
\begin{figure}[t!]
\centering
\includegraphics[width=0.8\columnwidth]{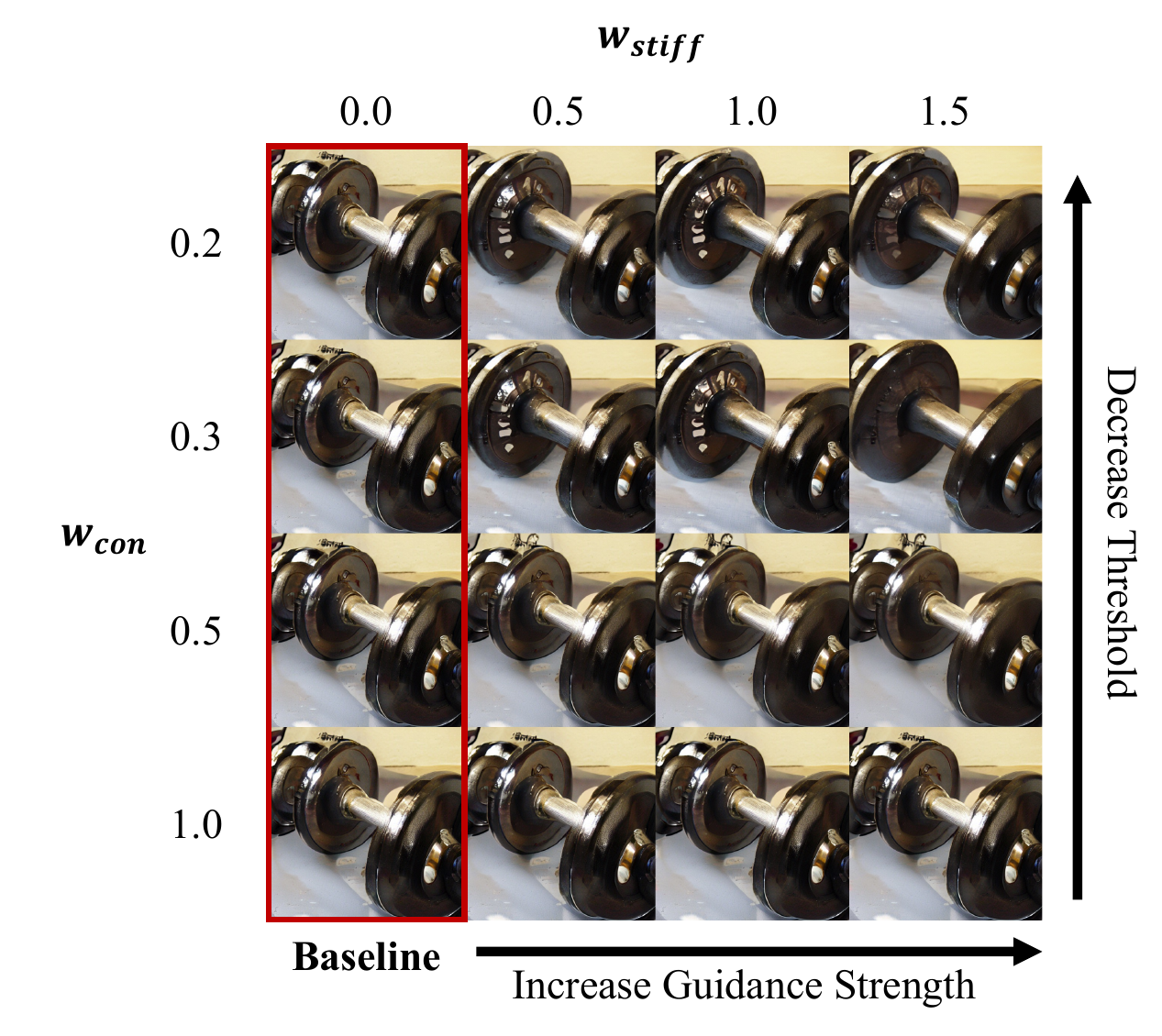}
\caption{\textbf{Qualitative results of ERK-Guid across guidance scales.}}
\label{Figure-QualitativeResults-Dumbbell}
\end{figure}

\end{document}